\newcommand{\PreserveBackslash}[1]{\let\temp=\\#1\let\\=\temp}
\newcolumntype{C}[1]{>{\PreserveBackslash\centering}p{#1}}
\newcolumntype{R}[1]{>{\PreserveBackslash\raggedleft}p{#1}}
\newcolumntype{L}[1]{>{\PreserveBackslash\raggedright}p{#1}}
\newcommand{\Xcal}{\mathcal{X}}
\newcommand{\br}[1]{\left({#1}\right)}
\newcommand{\bs}[1]{\boldsymbol{#1}}
\newcommand{\bsx}{\boldsymbol{x}}
\newcommand{\bsc}{\boldsymbol{c}}
\newcommand{\ch}{\mathrm{CONV}}
\newcommand{\preimage}{f^{-1}(\mathcal{S}_{\text{out}})}
\newcommand{\Sout}{\mathcal{S}_{\text{out}}}
\newcommand{\Sin}{\mathcal{S}_{\text{in}}}
\newcommand{\Sover}{\mathcal{S}_{\text{over}}}
\newcommand{\SLP}{\mathcal{S}_{\text{LP}}}
\newcommand{\SMILP}{\mathcal{S}_{\text{MILP}}}
\newcommand{\ReLU}{\mathsf{ReLU}}
\newcommand{\abcrown}{\alpha,\!\beta\text{-CROWN}} 
\theoremstyle{thm}
\newtheorem{theorem}{Theorem}
\theoremstyle{definition}
\theoremstyle{remark}
\theoremstyle{definition}
\newtheorem{exmp}{Example}
\ifdef{\hideNotes}{%
\newcommand{\huan}[1]{}
\newcommand{\suhas}[1]{}
\newcommand{\chris}[1]{}
\newcommand{\dvij}[1]{}
\newcommand{\zico}[1]{}}
{
\newcommand{\huan}[1]{\textcolor{blue}{Huan: #1}}
\newcommand{\suhas}[1]{\textcolor{purple}{Suhas #1}}
\newcommand{\chris}[1]{\textcolor{orange}{Christopher #1}}
\newcommand{\dvij}[1]{\textcolor{green}{Dj #1}}
\newcommand{\zico}[1]{\textcolor{red}{[Zico: #1]}}
}
\title{Provably Bounding Neural Network Preimages}
\author{%
  Suhas Kotha$^*$ \\
  Carnegie Mellon\\
  \texttt{suhask@andrew.cmu.edu} \\
  \And
  Christopher Brix$^*$ \\
  RWTH Aachen\\
  \texttt{brix@cs.rwth-aachen.de} \\
  \AND
  Zico Kolter \\
  Carnegie Mellon \\
  Bosch Center for AI \\
  \texttt{zkolter@cs.cmu.edu}
  \And
  Krishnamurthy (Dj) Dvijotham$^\dagger$ \\
  Google DeepMind\\
  \texttt{dvijothamcs@gmail.com}
  \And
  Huan Zhang$^\dagger$ \\
  UIUC\\
  \texttt{huan@huan-zhang.com}
}
\begin{document}

\maketitle

\def\thefootnote{\phantom{}}\footnotetext{* Equal contribution, ${}^\dagger$ Equal advising}\def\thefootnote{\arabic{footnote}}
\def\thefootnote{\phantom{}}\footnotetext{Instructions for reproducing our results are available at \href{https://github.com/kothasuhas/verify-input}{\texttt{https://github.com/kothasuhas/verify-input}}}\def\thefootnote{\arabic{footnote}}

\begin{abstract}
Most work on the formal verification of neural networks has focused on bounding the set of outputs that correspond to a given set of inputs (for example, bounded perturbations of a nominal input).
However, many use cases of neural network verification require solving the inverse problem, or over-approximating the set of inputs that lead to certain outputs.
We present the INVPROP algorithm for verifying properties over the preimage of a linearly constrained output set, which can be combined with branch-and-bound to increase precision.
Contrary to other approaches, our efficient algorithm is GPU-accelerated and does not require a linear programming solver.
We demonstrate our algorithm for identifying safe control regions for a dynamical system via backward reachability analysis, verifying adversarial robustness, and detecting out-of-distribution inputs to a neural network.
Our results show that in certain settings, we find over-approximations over $2500\times$ tighter than prior work while being $2.5\times$ faster. By strengthening robustness verification with output constraints, we consistently verify more properties than the previous state-of-the-art on multiple benchmarks, including a large model with 167k neurons in VNN-COMP 2023. Our algorithm has been incorporated into the $\alpha,\!\beta$-CROWN verifier, available at~\texttt{\url{https://abcrown.org}}.
\end{abstract}



\section{Introduction}\label{sec:intro}
Applying neural networks to safety-critical settings often requires reasoning about constraints on the inputs and outputs of a neural network. For example, for a physical system controlled by a neural network policy, it is of interest to understand which initial states will lead to an unsafe state such as colliding with an obstacle. Formal verification of neural networks seeks to provide provable guarantees demonstrating that networks satisfy formal specifications on their inputs and outputs.
Most work to date has focused on developing algorithms that can bound the outputs of a neural network given constraints on the inputs, which can be used for applications such as analyzing the robustness of a neural network to perturbations of a given input \citep{wong2018provable, dvijotham2018dual, zhang2018efficient,raghunathan2018certified, 8418593}.

In this work, we address the inverse problem of over-approximating a neural network's preimage: given a set of outputs $\Sout$ (described by linear constraints on the network output), we seek to find a set that provably contains all inputs that lead to such outputs. For example, for our neural network policy, this would correspond to the states that collide with an obstacle one step in the future. Though the verification problem is already challenging due to non-convexity and high dimensionality, this new problem is even more difficult since neural networks are generally not invertible. 

Specifically, representative efficient verifiers (such as state-of-the-art bound-propagation-based methods \citep{zhang2022general}) can only compute bounds utilizing constraints on the input and critically depend on having tight bounds on intermediate activations. In the setting of this paper, however, the bounds derived from the input constraints are almost vacuous, since the only constraints on the input are that it should be from the valid input domain. We efficiently solve this problem by significantly generalizing the existing bound propagation-based verification framework, allowing one to leverage output constraints when tightening the intermediate activations. Our contributions are as follows:

\begin{itemize}[wide]
    \item We develop an effective bound propagation framework,
    Inverse Propagation for Neural Network Verification (INVPROP), for the \emph{inverse verification problem} for neural networks, i.e., the problem of over-approximating the set of inputs that leads to a given set of outputs. Importantly, INVPROP requires no linear programming solver and can compute bounds on any intermediate layer.
    \item We unify INVPROP and traditional bound propagation into a more general verification framework, allowing us to connect our method to standard tools, such as the state-of-the-art bound propagation tool $\alpha,\!\beta$-CROWN \citep{zhang2018efficient, xu2021fast, wang2021beta, zhang2022general, brix2023vnncomp, mueller2023international}. Our contribution allows $\alpha,\!\beta$-CROWN to tighten intermediate bounds with respect to output constraints, which could not be done by the original tool.
    
    \item We demonstrate that tight inverse verification requires multiple iterative refinements of intermediate bounds. While layer bounds in standard bound propagation only depend on the bounds of their predecessors, INVPROP incorporates bounds of \emph{all} layers in the network.
    \item We improve the state of the art on a control benchmark \citep{rober2022hybrid, rober2022backward} by providing $2500\times$ tighter bounds, $2.5\times$ faster, for a Double Integrator and $257\times$ tighter bounds, $3.29\times$ faster, for a 6D Quadrotor. Furthermore, we demonstrate that INVPROP can strengthen robustness verification with output constraints and verify more robustness properties in less time compared to existing tools. Finally, we demonstrate its applicability in OOD detection. 
\end{itemize}

\section{Background and Problem Setup}\label{sec:setup}

\subsection{Notation}\label{sec:notation}

We use $[L]$ for $L \in \mathbb{N}$ to refer to the set $\{1, 2, \ldots, L\}$, $\mathbf{W}_{:,j}^{(i)}$ to refer to column $j$ of the matrix $\mathbf{W}^{(i)}$, $[\cdot]_+$ to refer to $\max(0, \cdot)$, and $[\cdot]_-$ to refer to $-\min(0, \cdot)$.
We use boldface symbols for vectors and matrices (such as $\bsx^{(i)}$ and $\mathbf{W}^{(i)}$) and regular symbols for scalars (such as $x^{(i)}_j$). We use $\bs{x} \odot \bs{y}$ to denote element-wise multiplication of vectors $\bs{x}, \bs{y}$.

We define an $L$ layer ReLU neural network by its weight matrices $\mathbf{W}^{(i)}$ and bias vectors $\mathbf{b}^{(i)}$ for $i \in [L]$. The output of the neural network for the input $\hat{\bsx}^{(0)}$ from a bounded input domain $\mathcal{X}$ is computed by alternately applying linear layers $\bsx^{(i)} = \mathbf{W}^{(i)}\hat{\bsx}^{(i-1)} + \mathbf{b}^{(i)}$ and ReLU layers $\hat{\bsx}^{(i)} = \max(\mathbf{0}, \bsx^{(i)})$ until we receive the output $\bsx^{(L)}$ (which we refer to as the logits). Note that we treat softmax as a component of the loss function, not the neural network.

\subsection{Problem Statement}\label{sec:statement}

Given a neural network $f: \Xcal \subseteq \mathbb{R}^{\text{in}} \rightarrow \mathbb{R}^{\text{out}}$ and an output constraint $\Sout \subseteq \mathbb{R}^{\text{out}}$, we want to compute $\preimage \subseteq \mathcal{X}$.
Since precisely computing or expressing $\preimage$ is an intractable problem in general, we strive to compute a tight over-approximation $\Sover$ such that $\preimage \subseteq \Sover$. In particular, we target the convex hull of the preimage via a cutting-plane representation. 
$\Sout$ will be defined by a set of linear constraints parameterized by $\mathbf{H}f\left(\bs{x}\right) + \mathbf{d} \leq \mathbf{0}$ in this work.

\subsection{Applications}
\label{sec:application}

\paragraph{Backward Reachability Analysis for Neural Feedback Loops.}

Establishing safety guarantees for neural network policies is a challenging task. 
One problem of interest is to find a set of initial states that does not reach a particular set of future states under the neural network policy. This can be helpful in collision avoidance or control with safety constraints. 
For example, consider a discrete-time double integrator controller \citep{hu2020doubleintegrator} where the state at time $t+1$ can be directly computed based on state at time $t$ following the equation
\[\bs{x}_{t+1} = f(\bs{x}_t) = \begin{bmatrix} 1 & 1 \\ 0 & 1\end{bmatrix}\bs{x}_t + \begin{bmatrix}0.5 \\ 1\end{bmatrix}\pi(\bs{x}_t)\]
with policy $\pi : \mathbb{R}^2 \to \mathbb{R}$.
If there is an obstacle in the room covering the region $[4.5, 5.0] \times [-0.25, 0.25]$, it is of interest to understand which states will enter the unsafe region in the next time-step.
We can represent this obstacle set with linear constraints that define $\Sout$.

$\preimage$ denotes the set of states $\bs{x_t}$ such that $\bs{x_{t+1}}$ lies in the unsafe region given the control policy $\pi$.
Overapproximating this set allows us to define the set of states to avoid one timestep in advance. We can compose $f^{-1}$ with itself $t$ times to obtain the set of initial states where $\pi$ would drive the system to the unsafe set after $t$ steps.

\paragraph{Robustness Verification}
Adversarial robustness is one classic problem for neural network verification where verifiers assess whether all perturbations of an input yield the same classification.
INVPROP can be used to speed up this verification by exploiting an implicit output constraint. We follow \cite{wang2021beta} and use the canonical form of verifying a property by proving that $\min_{x \in \Xcal} f(x) > 0$. $f(x)$ will be negative if and only if $x$ is an adversarial example.
This minimization problem can be rewritten as $\inf_{x \in X \land f(x) \leq 0} f(x)$ \citep{pengfei2021improving}.
If no adversarial example exists, the infimum is computed over an empty set and returns positive infinity, indicating local robustness.
If at least one adversarial example exists, the result is guaranteed to be negative.
With the output constraint $f(x) \geq 0$, we only analyze inputs that yield an incorrect classification, reducing the search space and tightening verification bounds.

\paragraph{Determining what inputs lead to confident predictions.}\label{sec:ood} 

It is challenging to train a classifier that knows when it doesn't know. A simple and effective approach is to train with an additional logit representing OOD abstention. Labeled data for this additional class is generated via outlier exposure, or adding synthetic training data known to be OOD \citep{hendrycks2018deep, chen2021atom}.

Consider the logits $y$ produced by a binary classifier trained in this manner. The classifier can classify in-distribution data by comparing $y_0$ and $y_1$, the logits corresponding to the two in-distribution labels. The quantity $\max(y_0, y_1) - y_2$, known as the logit gap, can be used to identify out-of-distribution data (Fig.~\ref{calibrated-toy-ood} in Appendix~\ref{sec:ooddetectionfigure}). Can the logit gap correctly identify data far from the training data as being OOD? To answer this quesiton, we can compute the preimage of the set $$\Sout = \{\bs{y} : \max(y_0, y_1) \geq y_2\}$$ We detail how to model this set with linear constraints in Appendix~\ref{sec:encodemax}.
INVPROP enables us to over-approximate $\preimage$, answering whether the classifier learned to correctly identify OOD data.

\section{Approach}\label{sec:approach}


\subsection{Convex over-approximation}

Suppose we want to find a convex over-approximation of $\preimage$. The tightest such set is its convex hull, which is the intersection of all half-spaces that contain $\preimage$ \citep{boyd2004convex}. This intersection is equivalent to
$\bigcap_{\bsc\in \mathbb{R}^{\text{in}}} \{\bsx : \bsc^{\top}\bsx \geq \min_{f(\bs{x'})\in{\Sout}}\bsc^{\top}\bs{x'}\},$ which means that we can build an over-approximation by taking this intersection for finitely many $\bsc$. Furthermore, replacing the minimization problem with a lower bound to its value still yields a valid half-space, where a tighter bound yields a tighter approximation.

We focus on convex over-approximations for two reasons: 
First, checking that the preimage satisfies a linear constraint $\boldsymbol{c}^\top \bsx + d \geq 0$ is equivalent to minimizing the linear function $\bsx \mapsto \boldsymbol{c}^\top \bsx$ over the preimage, which is in turn equivalent to minimizing the linear function over $\ch\br{\preimage}$ \citep{boyd2004convex}.
Second, the convex hull and its tractable over-approximations are conveniently represented as intersections of linear constraints on the preimage, each of which can be quickly computed after a single precomputation phase to tighten bounds on intermediate neurons using the INVPROP algorithm outlined in the next section.

For the above reasons, we will focus on solving the following constrained optimization problem. 
\begin{align}
\min _{\bs{x}} \quad & \bs{c}^{\top}\bs{x} ;
\quad \quad \text { s.t. } \bs{x} \in \Xcal; \quad f\br{\bs{x}} \in \Sout
\label{eq:advopt}
\end{align}
Note that this differs from the widely studied forward verification problem, which can be phrased as 
\begin{align*}
\min _{\bs{x} \in \Sin} \quad & \bs{c}^{\top}f\br{\bs{x}}
\end{align*}
where $\Sin$ is a set representing constraints on the input, and the goal of the verification is to establish that the output $f\br{\bs{x}}$ satisfies a linear constraint of the form $\bs{c}^\top f\br{\bs{x}} \geq d$ for all $x \in \Sin$.

\subsection{The INVPROP Algorithm}

As a brief overview of our method, we first construct the Mixed Integer Linear Program (MILP) for solving this optimization problem, which generates the over-approximation $\mathcal{S}_{\text{MILP}}$ when solved for finitely many $\bs{c}$. Then, we relax the MILP to a Linear Program (LP), which can construct the over-approximation $\mathcal{S}_{\text{LP}}$. Finally, we relax the LP via its Lagrangian dual, which will be used to construct our over-approximation $\Sover$. This chain of relaxations is visualized in Figure \ref{overapprox-viz}.

\begin{figure}[t]
\begin{center}
\begin{minipage}[c]{0.47\textwidth}
    \includegraphics[width=\textwidth] {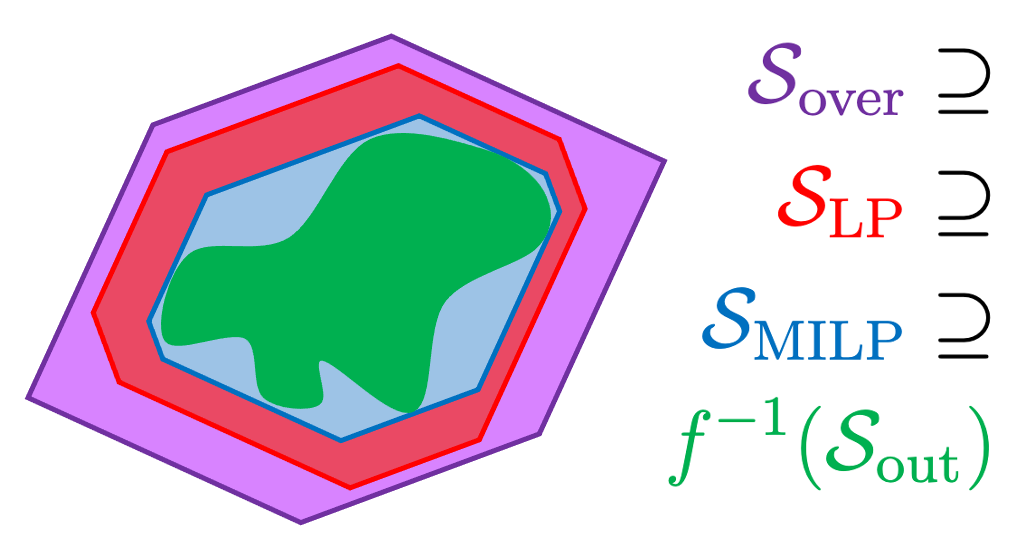}
  \end{minipage}\hfill
  \begin{minipage}[c]{0.5\textwidth}
    \caption{
       \textbf{Visualization of relaxations.} The inner green region depicts the true $\preimage$, the blue relaxation depicts the intersection of finite half-spaces solved via MILP, the red relaxation displays the same via LP, and the purple relaxation displays the same via INVPROP.
       Though this diagram displays looseness, we provide a comprehensive methodology to reduce the error in all three relaxations up to arbitrary precision (Section \ref{sec:branching}). 
    } \label{overapprox-viz}
  \end{minipage}
\end{center}
\end{figure}

\paragraph{The Mixed Integer Linear Programming (MILP) Formulation}

For feed-forward ReLU neural networks, the non-linearities from the max operator can be encoded via integer variables, and this problem admits a MILP encoding similar to prior work in adversarial robustness \citep{tjeng2017evaluating}.
Problem \eqref{eq:advopt} is equivalent to:
\begin{subequations}
\begin{align}
\min _{\bs{x}, \hat{\bs{x}}} \quad & \bs{c}^{\top}\bs{x} \\
\text { s.t. } \quad &
\bs{x} \in \mathcal{X} ; 
\quad  \hat{\bs{x}}^{(0)} = \bs{x} ; 
\quad \mathbf{H}\bs{x}^{(L)} + \mathbf{d} \leq \mathbf{0} ; \\
\quad & \bs{x}^{(i)} = \mathbf{W}^{(i)} \hat{\bs{x}}^{(i-1)}+\mathbf{b}^{(i)}
\quad i \in[L] ; \\
\quad & \hat{\bs{x}}^{(i)} = \max(0, \bs{x}^{(i)}) ;
\quad i\in[L-1]
\end{align}\label{eq:MILP}    
\end{subequations}
\paragraph{The Linear Programming (LP) Formulation}\label{sec:lp}

Unfortunately, finding an exact solution to this MILP is NP-complete \citep{katz2017reluplex}. To sidestep the intractability of exact verification, we can compute lower bounds for this program via its convex relaxation. We consider bounds on the outputs of intermediate layers: 
\[l^{(i)}_j \leq x^{(i)}_j \leq u^{(i)}_j\]
Based on these bounds, we can take the ReLU triangle relaxation \citep{ehlers2017formal} to get the LP
\begin{subequations}
\begin{align}
\min_{\bs{x}, \hat{\bs{x}}} \quad & \bs{c}^{\top}\bs{x} \label{eq:lp_obj} \\
\text { s.t. } \quad &
\bs{x} \in \mathcal{X} ;
\quad \bs{l}^{(0)} \leq \bs{x} \leq \bs{u}^{(0)} ;
\quad \hat{\bs{x}}^{(0)} = \bs{x} ;
\quad \mathbf{H}\bs{x}^{(L)} + \mathbf{d} \leq \mathbf{0} \label{eq:lp_output_constraint} \\
& \bs{x}^{(i)} =\mathbf{W}^{(i)} \hat{\bs{x}}^{(i-1)}+\mathbf{b}^{(i)} \\
& \bs{0} \leq \hat{\bs{x}}^{(i)} ;
\quad \bs{x}^{(i)} \leq \hat{\bs{x}}^{(i)} ;
\quad \hat{\bs{x}}^{(i)} \leq \frac{\bs{u}^{(i)}}{\bs{u}^{(i)} - \bs{l}^{(i)}} \odot \left(\bs{x}^{(i)} - \bs{l}^{(i)}\right) ;
\quad i \in [L]
\end{align}\label{eq:LP}    
\end{subequations}

where the bounds $\bs{l}^{(0)} \leq \bs{x} \leq \bs{u}^{(0)}$ are either the bounds implicit in $\Xcal$, or a refinement of these obtained via previous rounds of bound propagation or input branching (see Algorithm \ref{alg:invprop} for details).  

Most efficient neural network verifiers do not solve an LP formulation of verification directly because LP solvers are often slow for problem instances involving neural networks. Instead, the bound propagation framework~\citep{zhang2018efficient,wang2021beta,zhang2022general} is a practical and efficient way to lower bound the LP relaxation of the forward verification problem.
However, there are \emph{two major roadblocks} to applying existing methods here: Typical bound-propagation cannot directly handle the output constraints (Eq. \ref{eq:lp_output_constraint}) and the objective involving input layer variables (Eq. \ref{eq:lp_obj}).
This is true for optimizing bounds on the input, intermediate layers, and the output layer, all of which need to be iteratively tightened as demonstrated later.



\paragraph{The Inverse Propagation (INVPROP) Formulation}

By changing the LP above to optimize the quantity $\hat{x}^{(0)}_j$ (input layer) or $x^{(i)}_j$ (intermediate layers) for $i\in[L-1]$, the bounds for the $j$-th neuron of layer $i$ can be tightened in separate LP calls.
However, this program is too expensive to be run multiple times for each neuron. 

Inspired by the success of CROWN-family neural network verifiers \citep{zhang2018efficient, xu2021fast, wang2021beta, zhang2022general}, we efficiently lower bound the solution of the LP by optimizing its Lagrangian dual. This dual is highly structured~\citep{wong2018provable}, allowing us to bound input half-spaces $\bs{c}^{\top}\bsx$ and intermediate bounds $l^{(i)}_j, u^{(i)}_j$ by closed-form expressions of the dual variables.
Our main generalization is the ability to optimize input or intermediate layer bounds with the output constraints \textbf{after} them in the neural network, as shown in the following theorem. 
\begin{theorem}[Bounding input half-spaces]\label{thm:gamma-crown}

Given an output set $\Sout = \{\bs{y} : \mathbf{H}\bs{y} + \mathbf{d} \leq \mathbf{0}\}$ and vector $\bs{c}$, $g_{\bs{c}}(\bs{\alpha}, \bs{\gamma})$ is a lower bound to the linear program in \eqref{eq:LP} for $\bs{0} \leq \bs{\alpha} \leq \bs{1}$, $\bs{\gamma} \geq \bs{0}$, and $g_{\bs{c}}$ defined via 
$$
\begin{aligned}
g_{\bs{c}}(\bs{\alpha}, \bs{\gamma}) = &\left[\bs{c}^{\top} - \boldsymbol{\nu}^{(1) \top} \mathbf{W}^{(1)}\right]_+\bs{l}^{(0)}-\left[\bs{c}^{\top} -\boldsymbol{\nu}^{(1) \top} \mathbf{W}^{(1)}\right]_-\bs{u}^{(0)} \\ &-\sum_{i=1}^L \boldsymbol{\nu}^{(i) \top} \mathbf{b}^{(i)} + \sum_{i=1}^{L-1} \sum_{j \in \mathcal{I}^{\pm(i)}}\left[\frac{u^{(i)}_{j}l^{(i)}_j [\hat{\nu}^{(i)}_j]_{+}}{u^{(j)}_{i} - l^{(j)}_i} \right]
\end{aligned}
$$
where every term can be directly recursively computed via
$$
\begin{aligned}
\mathcal{I}^{-(i)} &= \{j : u^{(i)}_j \leq 0\} ; 
\quad \mathcal{I}^{+(i)} = \{j : l^{(i)}_j \geq 0\} ; 
\quad \mathcal{I}^{\pm(i)} = \{j : l^{(i)}_j < 0 < u^{(i)}_j\} \\ 
\boldsymbol{\nu}^{(L)} &= -\boldsymbol{\gamma} ;
\quad \hat{\nu}_j^{(i)} = \boldsymbol{\nu}^{(i+1) \top} \mathbf{W}_{:, j}^{(i+1)} \quad \forall i \in [L-1] \\
\nu_j^{(i)} &= \left\{\begin{array}{lr}
\hat{\nu}_j^{(i)}, &j \in \mathcal{I}^{+(i)} \\
0, & j \in \mathcal{I}^{-(i)} \\
\frac{u^{(i)}_{j}}{u^{(j)}_{i} - l^{(j)}_i}[\hat{\nu}^{(i)}_j]_{+} - \alpha^{(i)}_j[\hat{\nu}^{(i)}_{j}]_{-}, & j \in \mathcal{I}^{\pm(i)}
\end{array}\right. \quad \forall i \in [L-1]
\end{aligned}
$$
\end{theorem}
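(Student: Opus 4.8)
The plan is to establish the bound by weak Lagrangian duality: I would attach a free multiplier $\bs{\nu}^{(i)}$ to each affine equality $\bs{x}^{(i)} = \mathbf{W}^{(i)}\hat{\bs{x}}^{(i-1)} + \mathbf{b}^{(i)}$ and a nonnegative multiplier $\bs{\gamma} \geq \bs{0}$ to the output constraint $\mathbf{H}\bs{x}^{(L)} + \mathbf{d} \leq \bs{0}$, leaving the box constraints on $\bs{x}$ and the ReLU triangle inequalities to be handled directly inside the inner minimization. Since \eqref{eq:LP} is a minimization, for any fixed dual-feasible $(\bs{\nu},\bs{\gamma})$ the infimum of the resulting Lagrangian over the remaining primal variables is a valid lower bound on the LP optimum. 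The goal is then purely to show that, under the stated backward recursion for $\bs{\nu}^{(i)}$ driven by $\bs{\nu}^{(L)} = -\bs{\gamma}$ and $\hat{\nu}^{(i)}_j = \bs{\nu}^{(i+1)\top}\mathbf{W}^{(i+1)}_{:,j}$, this inner minimum evaluates in closed form to exactly $g_{\bs{c}}(\bs{\alpha},\bs{\gamma})$.

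Next I would show that the inner minimization decouples across layers. Collecting the Lagrangian terms, the bias multipliers produce the $-\sum_{i=1}^{L}\bs{\nu}^{(i)\top}\mathbf{b}^{(i)}$ contribution immediately, and the coupling between adjacent layers is precisely the source of the backward recursion $\hat{\nu}^{(i)}_j = \bs{\nu}^{(i+1)\top}\mathbf{W}^{(i+1)}_{:,j}$. The input layer contributes a term linear in $\bs{x}$ with coefficient $\bs{c}^\top - \bs{\nu}^{(1)\top}\mathbf{W}^{(1)}$; minimizing a linear function over the box $\bs{l}^{(0)} \leq \bs{x} \leq \bs{u}^{(0)}$ selects, coordinate-wise, the lower corner where the coefficient is positive and the upper corner where it is negative, which is exactly the $[\,\cdot\,]_+\bs{l}^{(0)} - [\,\cdot\,]_-\bs{u}^{(0)}$ expression heading $g_{\bs{c}}$.

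The crux is the per-neuron minimization over the ReLU triangle relaxation, which simultaneously determines the definition of $\nu^{(i)}_j$ and the residual constant term. For stable-inactive neurons ($j \in \mathcal{I}^{-(i)}$) the relaxation forces $\hat{x}^{(i)}_j = 0$, so the correct choice is $\nu^{(i)}_j = 0$; for stable-active neurons ($j \in \mathcal{I}^{+(i)}$) it forces $\hat{x}^{(i)}_j = x^{(i)}_j$, so the coefficient passes straight through as $\nu^{(i)}_j = \hat{\nu}^{(i)}_j$. For an unstable neuron ($j \in \mathcal{I}^{\pm(i)}$) the feasible region is the triangle bounded below by $\hat{x} \geq 0$ and $\hat{x} \geq x$ and above by the chord $\hat{x} \leq \frac{u^{(i)}_j}{u^{(i)}_j - l^{(i)}_j}(x^{(i)}_j - l^{(i)}_j)$; minimizing the local linear term over this triangle is where $\bs{\alpha}$ enters as a free parameterization of which lower face is tight, producing the $-\alpha^{(i)}_j[\hat{\nu}^{(i)}_j]_-$ piece, while the upper chord supplies both the slope $\frac{u^{(i)}_j}{u^{(i)}_j - l^{(i)}_j}[\hat{\nu}^{(i)}_j]_+$ in $\nu^{(i)}_j$ and the affine offset $\frac{u^{(i)}_j l^{(i)}_j[\hat{\nu}^{(i)}_j]_+}{u^{(i)}_j - l^{(i)}_j}$ collected into $g_{\bs{c}}$.

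The main obstacle I anticipate is the sign bookkeeping in this unstable-neuron case: one must verify that $[\hat{\nu}^{(i)}_j]_+$ always multiplies the upper-chord contribution while $[\hat{\nu}^{(i)}_j]_-$ multiplies the lower faces, confirm that every $\bs{\alpha} \in [0,1]$ yields a sound (if possibly loose) lower bound so that $g_{\bs{c}}$ is valid over the whole stated range rather than only at the optimum, and check the direction of weak duality so that the relaxation never discards a feasible primal point. A clean way to control this is to match the derivation term-by-term against the standard CROWN / Wong--Kolter dual network, after which the only genuinely new ingredient is the output-constraint multiplier entering through $\bs{\nu}^{(L)} = -\bs{\gamma}$; the sign there (note $\mathbf{H}\bs{x}^{(L)} + \mathbf{d} \leq \bs{0}$ rather than $\geq$) is the other place where a careful check is warranted.
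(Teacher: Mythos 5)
Your approach is sound and arrives at the right bound, but it takes a genuinely different (and equally standard) route from the paper's. The paper dualizes \emph{every} constraint of the LP \eqref{eq:LP} --- introducing multipliers $\boldsymbol{\mu},\boldsymbol{\tau},\boldsymbol{\lambda}$ for the three triangle inequalities in addition to $\boldsymbol{\nu}$ and $\boldsymbol{\gamma}$ --- and then observes that the coefficients of the now-unconstrained primal variables $\bs{x},\hat{\bs{x}}$ must vanish, which turns those stationarity conditions into the recursion for $\nu_j^{(i)}$ after the reparameterization $(u_j^{(i)}-l_j^{(i)})\lambda_j^{(i)}=[\hat\nu_j^{(i)}]_+$, $\mu_j^{(i)}+\tau_j^{(i)}=[\hat\nu_j^{(i)}]_-$, with $\alpha_j^{(i)}$ encoding the split of $[\hat\nu_j^{(i)}]_-$ between $\mu$ and $\tau$. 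You instead dualize only the affine equalities and the output constraint and evaluate the inner minimization by backward linear substitution (the CROWN / Wong--Kolter view): replace each $\hat x_j^{(i)}$ by the chord or by $\alpha_j^{(i)}x_j^{(i)}$ according to the sign of its coefficient $-\hat\nu_j^{(i)}$. The two derivations are known to produce identical bounds; yours is arguably more transparent about \emph{why} every $\bs{\alpha}\in[0,1]$ is sound (each value gives a valid implied lower face of the triangle), while the paper's makes the connection to the LP dual and the role of each multiplier explicit. One caution on your per-neuron step: if you literally minimize the decoupled linear function over the compact triangle you get the vertex minimum $\min(\nu l,\,0,\,\nu u-\hat\nu u)$, not the stated formula; you must instead perform the sign-dependent substitution for $\hat x_j^{(i)}$ \emph{first} and let the resulting coefficient on $x_j^{(i)}$ \emph{define} $\nu_j^{(i)}$ for further backward propagation, which is what your "upper chord supplies the slope and offset" sentence implicitly does.

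One concrete detail you must add to land exactly on the stated recursion: the theorem's base case $\boldsymbol{\nu}^{(L)}=-\boldsymbol{\gamma}$ and the absence of a $\boldsymbol{\gamma}^{\top}\mathbf{d}$ term in $g_{\bs{c}}$ presuppose that $\mathbf{H}$ and $\mathbf{d}$ have been folded into the final linear layer, so that the output constraint is simply $\bs{x}^{(L)}\leq\bs{0}$; the paper does this as its very first step. Dualizing $\mathbf{H}\bs{x}^{(L)}+\mathbf{d}\leq\bs{0}$ directly, as you propose, would instead give $\boldsymbol{\nu}^{(L)}=-\mathbf{H}^{\top}\boldsymbol{\gamma}$ plus an additive $\boldsymbol{\gamma}^{\top}\mathbf{d}$. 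Either convention yields a valid lower bound, but without the folding step your derivation does not match the formula as stated.
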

\begin{proof}
Full proof is presented in Appendix \ref{sec:theorem-proof}.
\end{proof}


In Appendix \ref{sec:intermediate-bounds-theorem}, we show how to bound intermediate neurons in the network using a similar approach. Since the intermediate bounds might depend on bounds on neurons in subsequent layers (due to the output constraint), we cannot simply optimize bounds in a single forward pass layer by layer, unlike prior work such as $\alpha$-CROWN~\citep{xu2021fast}. Instead, we must iteratively tighten intermediate layer bounds with respect to the tightest bounds on all neurons computed thus far. This iterative approach can tighten the initially loose bounds by several orders of magnitude, as shown in Figure \ref{fig:iterating}.
After performing this procedure once, the intermediate bounds can be used to tightly lower bound $\bs{c}^\top\bs{x}$ for any $\bs{c}$ via Theorem~\ref{thm:gamma-crown}. Therefore, this computation can be shared across all the constraints $\bs{c}$ we use to describe $\Sover$. Our algorithm can be expressed in terms of forward/backward passes through layers of the neural network and implemented via standard deep learning modules in libraries like PyTorch \citep{NEURIPS2019_9015}. Since all of the operations are auto-differentiable, we can tighten our lower bound using standard gradient ascent (projected by the dual variable constraints).


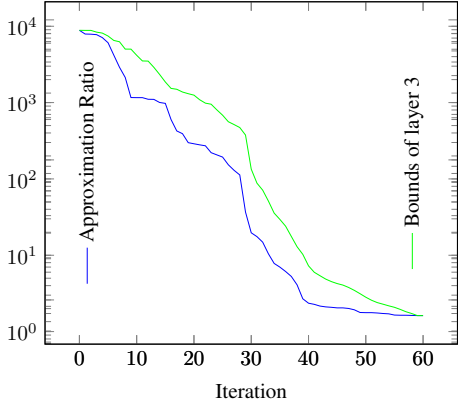
\begin{figure}[b!]
\begin{center}
\begin{minipage}[c]{0.45\textwidth}
\scalebox{0.8}{
    \begin{tikzpicture}[scale=1.0]
     \begin{axis}[
       xlabel={Iteration},
       ylabel={\ref*{line:ar} Approximation Ratio},  
       ymode=log,
       ylabel near ticks, yticklabel pos=left,   
ylabel shift = -50 pt
]
     \addplot[color=blue] table [y=AR, x=iteration]{icml2023/assets/bp_10.dat}; \label{line:ar}
     \end{axis}
     \begin{axis}[
       ylabel={\ref*{line:interm} Bounds of layer 3},
       ymode=log,
       ylabel near ticks, yticklabel pos=right,        
ylabel shift = -50 pt,
]
     \addplot[color=green] table [y=interm, x=iteration]{icml2023/assets/bp_10.dat}; \label{line:interm}
     \end{axis}
     \end{tikzpicture}
    }
  \end{minipage}\hfill
  \begin{minipage}[c]{0.5\textwidth}
    \caption{\textbf{Necessity of Iterative Tightening.} Our approach enables us to iteratively tighten the bounds of all layers, with each iteration allowing for a smaller approximation ratio with respect to the true preimage.
     Green: Sum of bound intervals for all neurons in the third layer (second hidden layer); Blue: ratio between volumes of over-approximation and preimage.
     Measured for step $t=10$ of the control benchmark defined in Section \ref{sec:control-results}.
     Note that the improvement from iterative tightening is two orders of magnitude for intermediate bounds, and four orders of magnitude for the volume of the over-approximation.} 
    \label{fig:iterating}
  \end{minipage}
\end{center}
\end{figure}




\begin{algorithm}[tb]
   \caption{INVPROP. Can be applied to branches for non-convex overapproximations.
   Lower and upper bound of all neurons and all constraints are optimized using distinct $\alpha$ and $\gamma$ values in $g$.
   }
   \label{alg:invprop}
\begin{algorithmic}
   \STATE Initialize $\bs{l}^{(i)}, \bs{u}^{(i)}$ via cheap bound propagation methods for the forward verification problem
   \WHILE{lower bounds for $\bs{c}^{\top}\hat{\bs{x}}^{(0)}$ are improving}
      \FOR{$i \in \{L-1, L-2, \ldots, 1, 0\}$}
      \FOR{$j \in \text{layer }i\text{ neurons}, b \in \{\text{lower, upper}\}$}
      \STATE Optimize $g^b_{ij}$ from Theorem~\ref{thm:general-gamma-crown} via gradient ascent to improve bound on neuron $j$ in layer $i$ with sense $b$ (lower/upper)
      \STATE \algorithmicif\ $b=$upper \algorithmicthen\ update $u^{(i)}_j$ \algorithmicelse\ update $l^{(i)}_j$
      \ENDFOR
      \ENDFOR
      \STATE Optimize $g_{\bs{c}}$ from Theorem~\ref{thm:general-gamma-crown} via gradient ascent to improve lower bound on $\bs{c}^{\top}\bsx$ for all $\bs{c}$
   \ENDWHILE
\end{algorithmic}
\end{algorithm}

\paragraph{Connection to forward verification}\label{sec:reduction} 
Our bound in Theorem \ref{thm:gamma-crown} introduces the dual variable $\bs{\gamma}$, which enforces the output constraint during optimization. In fact, we can use this variable to get a better conceptual interpretation of our result. For the optimization problem in \eqref{eq:advopt}, taking the dual with respect to the constraint $\mathbf{H}f\br{\bs{x}}+\mathbf{d} \leq \mathbf{0}$ yields the lower bound
$$\begin{aligned}
\max_{\bs{\gamma}} \min _{\bs{x}} \quad & \bs{c}^{\top}\bsx + \bs{\gamma}^\top\left(\mathbf{H}f\left(\bs{x}\right) + \mathbf{d}\right)\\
\text{ s.t. } \quad & \bs{x} \in \mathcal{X};\quad\bs{\gamma} \geq 0
\end{aligned}$$
The objective can be represented as minimizing a linear function of the output of a residual neural network with a skip connection from the input to the output layer, subject to constraints on the input. Now, $f(\bs{x})$ appears in the objective, similar to the standard forward verification problem with an augmented network architecture. Similarly, optimizing an intermediate bound is equivalent to a skip connection from layer $i$ to the output (Appendix \ref{sec:skip-connection}).
These connections allow us to implement our method using standard verification tools~\citep{brix2023vnncomp,xu2021fast,xu2020automatic}. 

When $f$ is a feedforward ReLU network, the lower bound described in this section is precisely the same as Theorem~\ref{thm:gamma-crown} (since both are solving the dual of the same linear program). This shows that the introduction of $\bs{\gamma}$ constitutes our generalization to the bound propagation framework.

\paragraph{Selection of $\bs{c}$}
The dependence of the optimization on $\bs{c}$ is not a major limitation. 
The bounds of all intermediate neurons are optimized only with respect to the input and output constraints, not the hyperplanes used to describe the input half-space.
Thus, their optimized bounds can be shared across the optimization of all input hyperplanes. For applications such as adversarial robustness, the box constraint of two hyperplanes per dimension could be chosen to scalably bound the input. The selection of appropriate $\bs{c}$ depends on the application and is not our main focus.

\subsection{Branch and Bound}\label{sec:branching}

Though more rounds of iterative tightening will lower the gap in $\Sover \supseteq \SLP$, our current formulation still faces two sources of looseness: the gap in $\SLP \supseteq \SMILP$ and the gap in $\SMILP \supseteq \preimage$. To overcome both of these issues, we can make use of branching~\citep{bunel2020branch}. While there are several possibilities here, we focus on input branching, which gave the biggest empirical gains in our experiments described in Section \ref{sec:results}. More concretely, we divide the input domain $\Xcal=[\bs{l}^{(0)}, \bs{u}^{(0)}]$ into several regions by choosing a coordinate $i$ and computing 
\[\Xcal_a = \Xcal \cap \{\bs{x}: \bs{x}_i \geq s_i\}, \Xcal_b = \Xcal \cap \{\bs{x}: \bs{x}_i \leq s_i\}\]
so that $\Xcal=\Xcal_a \cup \Xcal_b$. Doing this recursively, we obtain several regions and can compute an overapproximation of $\preimage$ when the input is in each of those regions, and take a union of the resulting sets. The finer the input domain of our over-approximation, the tighter the approximation.

\section{Results}\label{sec:results}

We evaluate INVPROP on three different benchmarks. Across benchmarks, we find orders of magnitude improvement over prior work, and our methods work even for relatively large networks (167k neurons) and high dimensionality inputs (8112 dimensions). 
All implementation details are described in Appendix~\ref{sec:implementation} and the utilized hardware is described in Appendix~\ref{sec:hardwar}.

\subsection{Backward Reachability Analysis for Neural Feedback Loops}
\label{sec:control-results}
\paragraph{Double Integrator} After combining the state-feedback control policy $\pi$ of the benchmark with the state transition function from Section \ref{sec:application}, (see Section \ref{sec:controlbenchmarkencoding}), we get a three layer MLP with $12$, $7$, and $2$ neurons, which is typical for applications in this setting. We model multiple time steps via composing copies of this function.\footnote{For example, the $10$ time step transition function can be represented as a $21$ layer MLP (after fusing consecutive linear layers in the composition).} Moreover, we leverage intermediate bounds for time step $t$ to bound the $t+1$ time step transition. We measure the tightness of an over-approximation using its Approximation Ratio, defined as $\frac{\text{vol}(\Sover)}{\text{vol}(\preimage)}$. Both of these volumes are heuristically estimated with 1 million samples of the input space.
While INVPROP allows the optimization of arbitrary input cutting planes, we use 40 planes with slopes of equally distributed angles.

We find that INVPROP is significantly tighter and faster than ReBreach-LP, the SOTA available method for this problem \citep{rober2022hybrid, rober2022backward}.\footnote{We do not compare with \cite{everett2022drip} which improves upon ReBreach-LP, as their implementation faces numerical instability when applied for ten timesteps. However, preliminary experiments show that we also outperform their bounds.} 
As evident in Figure \ref{fig:control10steps}, ReBreach-LP suffers from increasingly weak bounds as $t$ increases, whereas our approach is able to compute tight bounds for all $t$.
Different to their implementation, we are able to leverage the output constraint to improve the intermediate neuron bounds. 
Furthermore, we can iteratively tighten these bounds with respect to each other. Even though we optimize many more quantities, our efficient bound propagation allows us to solve the problem faster overall. 
We provide a visualization of the improvement of the approximation ratio score and intermediate bounds over time in Figure~\ref{fig:iterating}. Both our increased tightness and speed are quantified in Table \ref{control-results-table}.

\begin{figure}[t]
\begin{center}
\begin{minipage}[c]{0.45\textwidth}
    \begin{subfigure}{0.49\linewidth}
        \includegraphics[width=\linewidth]{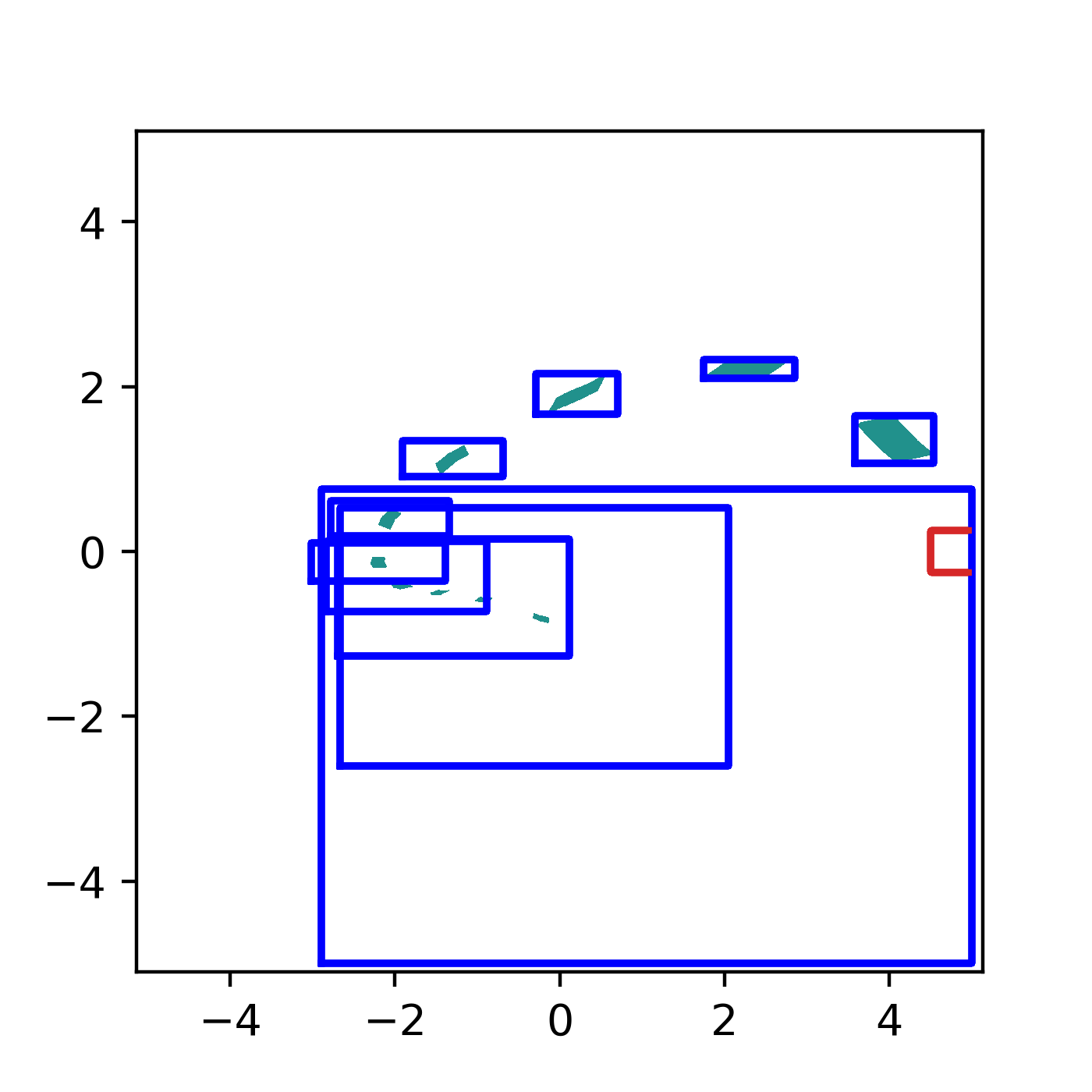}
        \caption{ReBreach-LP}
    \end{subfigure}
    \begin{subfigure}{0.49\linewidth}
        \includegraphics[width=\linewidth]{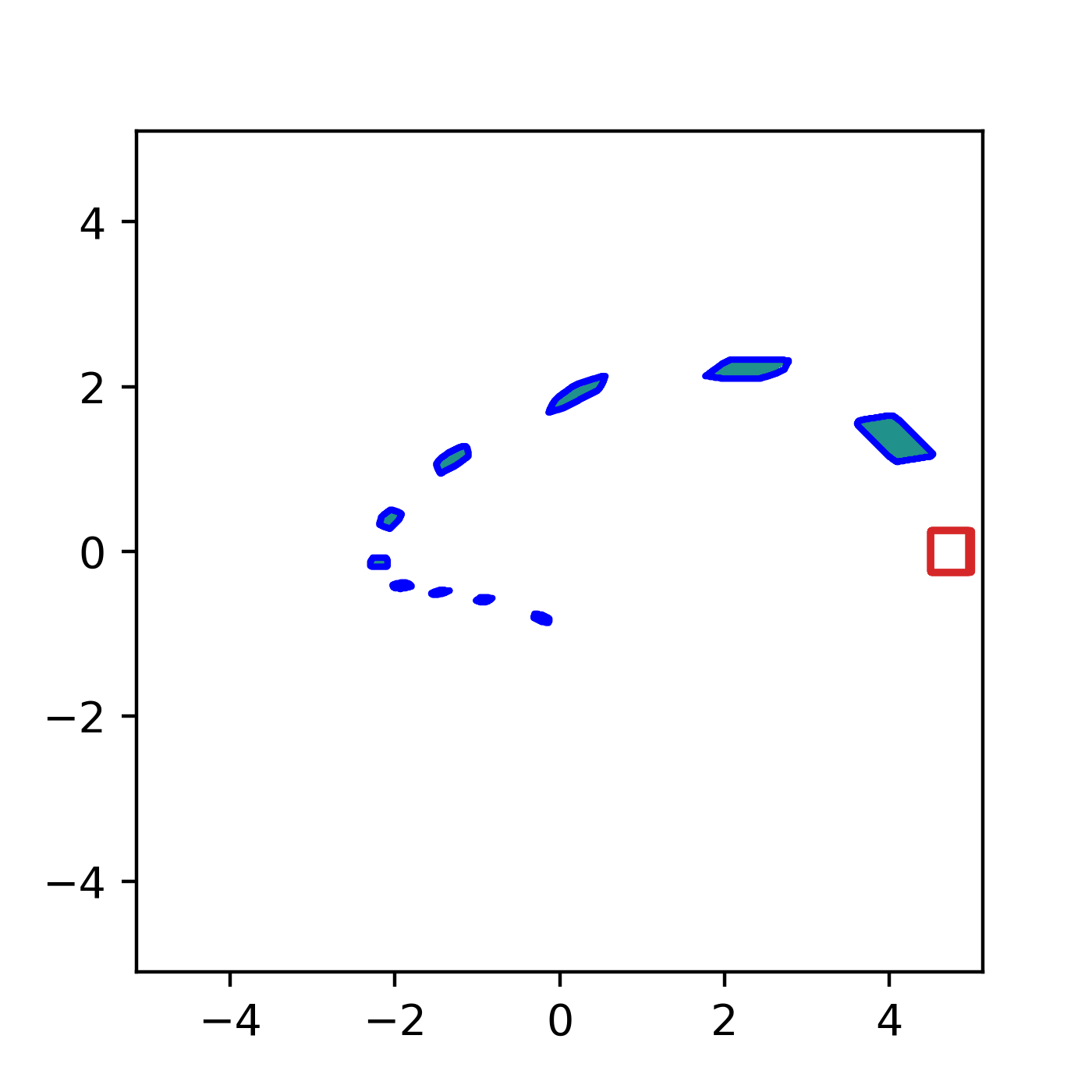}
        \caption{INVPROP}
    \end{subfigure}
\end{minipage}\hfill
\begin{minipage}[c]{0.52\textwidth}
    \caption{
        \textbf{Double Integrator.} Each green region represents the preimage of the system one to ten steps back from the obstacle (red box). Each preimage is approximated $1$ million samples. The blue bounding boxes represent over-approximations, and INVPROP tightly over-approximates preimages even ten steps back.
    } \label{fig:control10steps}
\end{minipage}
\end{center}
\vspace{-0.2cm}
\end{figure}


\begin{table}
\begin{minipage}{0.4\textwidth}
    \caption{\textbf{Double Integrator Results.} INVPROP vs SOTA (from Figure~\ref{fig:control10steps}). We are significantly tighter and faster than ReBreach-LP method even without input branching.}
    \label{control-results-table}
\end{minipage}\hfill
\begin{minipage}{0.55\textwidth}
    \begin{center}
    \begin{small}
    \begin{sc}
    \begin{tabular}{lcc}
    \toprule
    Method & ReBreach-LP & INVPROP \\
    \midrule
    Input Branching & yes & no \\
    Approx Ratio $\downarrow$  & 4021.47 & 1.46 \\
    Time (sec) & $42.86\pm0.04$ & $17.89\pm0.03$ \\
    \bottomrule
    \end{tabular}
    \end{sc}
    \end{small}
    \end{center}
\end{minipage}
\end{table}

\paragraph{6D Quadrotor} We consider over-approximating the preimage of the linearized quadrotor discussed in Figure 15 of \cite{rober2022backward}. This quadrotor has a 6-dimensional state space with 3 dimensions dedicated to position and is given by the dynamics $$\bs{x}_{t+1} = \begin{bmatrix}
1 & 0 & 0 & 1 & 0 & 0 \\
0 & 1 & 0 & 0 & 1 & 0 \\
0 & 0 & 1 & 0 & 0 & 1 \\
0 & 0 & 0 & 1 & 0 & 0 \\
0 & 0 & 0 & 0 & 1 & 0 \\
0 & 0 & 0 & 0 & 0 & 1 \\
\end{bmatrix}\bs{x}_t + \begin{bmatrix}
0.5 & 0 & 0 \\
0 & 0.5 & 0 \\
0 & 0 & 0.5 \\
1 & 0 & 0 \\
0 & 1 & 0 \\
0 & 0 & 1 \\
\end{bmatrix}\pi(\bs{x}_t)$$ 
The policy, after performing the same encoding as the double integrator and clipping policy outputs, forms a five layer MLP with 26, 26, 9, 9, and 3 neurons. For these dynamics, we want to prove that the initial state range of $[-5.25,-4.75]\times[-.25,.25]\times[2.25,2.75]\times[0.95,0.99]\times[-0.01,0.01]\times[-0.01,0.01]$ (black box in Figure \ref{fig:quadrotor}) will never collide with the obstacle at $[-1,1]\times[-1,1]\times[1.5,3.5]\times[-1,1]\times[-1,1]\times[-1,1]$ (red box). We try to find the tightest bounding box in the 3 position dimensions, reflecting a total of 6 input cutting planes (one for each direction in each dimension). To benchmark against prior work, we run ReBreach-LP with no output partitioning and ReBreach-LP with 15625 output partitions. We run our algorithm with no branching. In Figure \ref{fig:quadrotor}, we compare these three strategies, and we find that our over-approximation of the pre-image (in blue) is smaller than the state-of-the-art over-approximations by $257$ times while being $3.29$ times faster. Therefore, our algorithm scales well to higher dimensional control examples.

\begin{figure}
\centering
\begin{subfigure}{0.31\textwidth}
  \centering
  \includegraphics[width=0.9\linewidth]{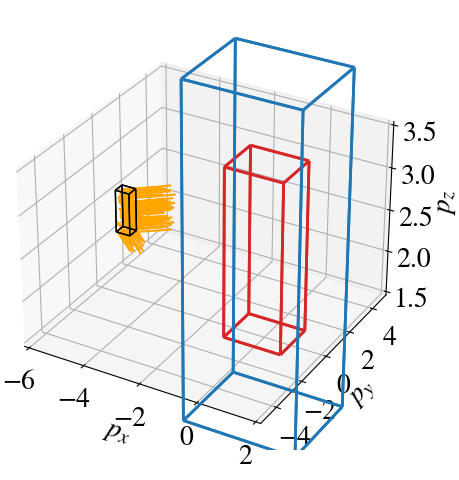}
  \caption{ReBreach-LP no partitions}
\end{subfigure}
\begin{subfigure}{0.31\textwidth}
  \centering
  \includegraphics[width=0.9\linewidth]{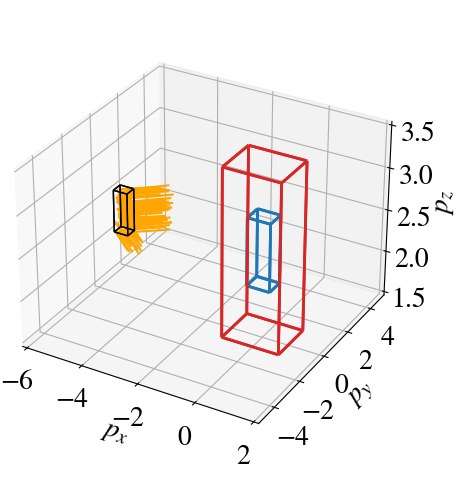}
  \caption{ReBreach-LP 15625 partitions}
\end{subfigure}
\begin{subfigure}{0.31\textwidth}
  \centering
  \includegraphics[width=0.9\linewidth]{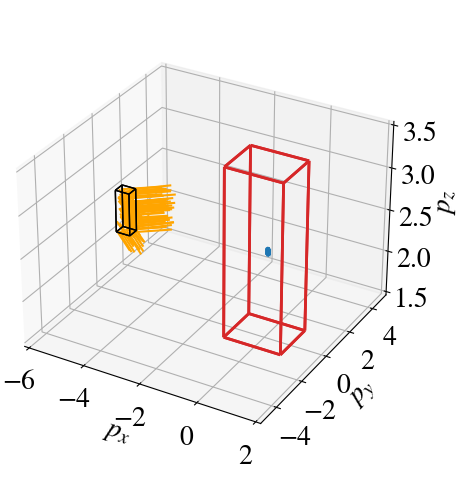}
  \caption{INVPROP}
\end{subfigure}
\caption{\textbf{6D Quadrotor.} We visualize the three dimensions of the 6D quadrotor that specify physical position. We consider the initial state range of the black box and simulate a few one-step trajectories, indicated by the orange lines. Our obstacle is the red box. For the three images, the blue box represents the over-approximation computed by (a) ReBreach-LP with no partitioning, (b) ReBreach-LP with 15625 partitions, and (c) INVPROP, respectively. A tighter blue box is better.}
\label{fig:quadrotor}
\end{figure}

\begin{table}
\begin{minipage}{0.29\textwidth}
    \caption{\textbf{Quadrotor Results.} We compare the tightness of the over-approximations and run time (mean and std taken over 5 runs).}
    \label{quadrotor-results-table}
\end{minipage}\hfill
\begin{minipage}{0.68\textwidth}
    \begin{center}
    \begin{small}
    \begin{sc}
    \begin{tabular}{lccc}
    \toprule
     Method & ReBreach-LP & ReBreach-LP & INVPROP \\
     \midrule
     Partitions & N/A & 15625 & N/A \\
     Approx Vol $\downarrow$ & 64 & 0.064 & 0.0000249 \\
     Time (sec) & 11.8 $\pm$ 2.2 & 662.2 $\pm$ 31.4 & 213.8 $\pm$ 11.1 \\
    \bottomrule
    \end{tabular}
    \end{sc}
    \end{small}
    \end{center}
\end{minipage}
\end{table}

\subsection{Robustness Verification}
\label{sec:evalnnverification}
Similar to us, \cite{pengfei2021improving} encode the implicit output constraint of the local robustness verification query to tighten the bounds on neurons.
However, they need an LP solver, which does not scale to large problems.
The state-of-the-art verification toolkit $\abcrown$
\citep{zhang2018efficient, xu2021fast, wang2021beta, zhang2022general} is able to scale to large networks, but does not currently utilize the implicit output constraint.
We demonstrate the benefit of encoding the output constraint by extending $\abcrown$ and comparing the performance on the benchmark used by \cite{pengfei2021improving} as well as benchmarks from the VNN-COMP 2023 \citep{brix2023vnncomp}.

\paragraph{MNIST}
\cite{pengfei2021improving} provides four networks with ReLU activation functions and dense linear layers of 2/6/6/9 layers of 100/100/200/200 neurons, each trained on MNIST. 
For each network, 50 inputs were tested for local robustness.
For the complete benchmark definition, we refer to \cite{pengfei2021improving}.
The DeepSRGR results are taken from \cite{pengfei2021improving}, they do not report a timeout for their experiments.
Both $\abcrown$ and $\abcrown$+INVPROP were run with a per-input timeout of 5 minutes.
Except for the input bounds, all bounds of all layers of each network are tightened using output constraints.
We report the results in Table~\ref{table:localrobustness}.
Notably, we can verify more instances than the SOTA tool $\abcrown$, in sometimes less than one fifth the average runtime.
We include a more detailed comparison between the results for $\abcrown$ and $\abcrown$+INVPROP in Appendix~\ref{sec:detailedrobustnessbenchmarks}.

\paragraph{VNN-COMP '23: YOLO}
In 2023, the VNN-COMP contained YOLO as an unscored benchmark for object detection.
It is a modified version of YOLOv2 \citep{YOLOv2} with a network of 167k neurons and 5 residual layers, and is available at \citep{YOLObenchmarkDefinition, YOLObenchmarkAsUsed}.
The network processes $3 \times 52 \times 52$ images and uses convolutions, average pooling and ReLU activations.
This benchmark is well suited for INVPROP, as the definition of the adversarial examples is a conjunction of constraints over the output.
Therefore, a strong output constraint can be used to tighten the bounds of intermediate layers.
The benchmark consists of 464 instances, of which 72 were randomly selected for the VNN-COMP.
For our comparison, we remove those 348 instances that $\abcrown$ verifies as robust without tightening the bounds of any intermediate layer.
Those instances are verified within less than four seconds each by both methods, with no room for improvement.
We compare the performance of $\abcrown$ and $\abcrown$ + INVPROP on the remaining 116 instances in Figure~\ref{fig:yolobenchmark}.
$\abcrown$ can verify 48 instances, all other instances reach the timeout of five minutes.
After extending $\abcrown$ with INVPROP on the last two intermediate layers, almost all instances can be solved faster, and 6 previously timed out instances become verifiable.


\begin{figure}
\begin{center}
\begin{minipage}[c]{0.35\textwidth}
    \includegraphics[trim={0 2.3cm 0 12cm},clip,width=\columnwidth]{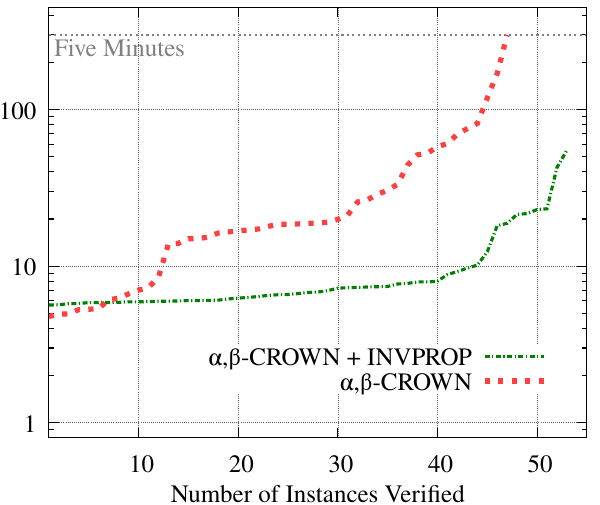}
  \end{minipage}\hfill
  \begin{minipage}[c]{0.6\textwidth}
    \caption{\textbf{YOLO Results.} Runtime comparison of $\alpha,\!\beta$-CROWN and $\alpha,\!\beta$-CROWN+INVPROP on the YOLO benchmark (167k neurons and 5 residual layers).
    For the comparison, only those test instances were used that could not immediately be verified by $\alpha,\!\beta$-CROWN without any iterative tightening of intermediate layer bounds. $\alpha,\!\beta$-CROWN+INVPROP can verify more properties and is faster for almost all instances.} \label{fig:yolobenchmark}
  \end{minipage}
\end{center}
\end{figure}

\begin{table}
\vspace{-0.8cm}
\caption{\textbf{MNIST Results.} Robustness verification with DeepSRGR (output constraints via LP solver), $\alpha,\!\beta$-CROWN (GPU support) and $\alpha,\!\beta$-CROWN+INVPROP (output constraints with GPU support)}
\label{table:localrobustness}
\vskip 0.15in
\begin{center}
\begin{small}
\begin{sc}
\begin{tabular}{lcccccc}
\toprule
Network & \multicolumn{2}{c}{DeepSRGR} & \multicolumn{2}{c}{$\alpha,\!\beta$-CROWN} & \multicolumn{2}{c}{$\alpha,\!\beta$-CROWN + INVPROP} \\
 & verif. & avg. time [s] & verif. & avg. time [s] & verif. & avg. time [s] \\ 
\midrule
FFN4 (3x100) & 35 & 781 & 45 & 7.4 & 45 & 6.4 \\
FFN5 (6x100) & 31 & 1689 & 38 & 6.7 & 39 & 8.7 \\
FFN6 (6x200) & 31 & 6178 & 29 & 5.4 & 30 & 1.0 \\
FFN7 (9x200) & 36 & 8960 & 37 & 3.2 & 40 & 1.9 \\
\bottomrule
\end{tabular}
\end{sc}
\end{small}
\end{center}
\end{table}

\subsection{OOD Detection}%
\label{sec:results-ood}%
Consider the calibrated OOD detector presented as discussed in Section \ref{sec:application}, encoded by a four layer MLP with $200$, $200$, $3$, and $2$ neurons. We over-approximate the set of inputs which induce a sufficiently high in-distribution (ID) confidence (measured by $\max\{y_0, y_1\} > y_2$) using 40 hyper-planes of equal slope, pictured in green in Figure \ref{branching-comparison-lines}.
This set is non-convex, making the convex hull a poor over-approximation. With 4 input space branches, we get a much tighter over-approximation, as shown in the right plot.
We compare the performance of our approach with and without branching over the input space with the MILP baseline (see Table \ref{ood-results-table}). This demonstrates a simple proof-of-concept for how INVPROP can be used for verifying some calibration properties.

\begin{figure}
\begin{minipage}{0.45\columnwidth}
\begin{center}
\centering
\begin{subfigure}{0.48\textwidth}
        \includegraphics[width=\linewidth]{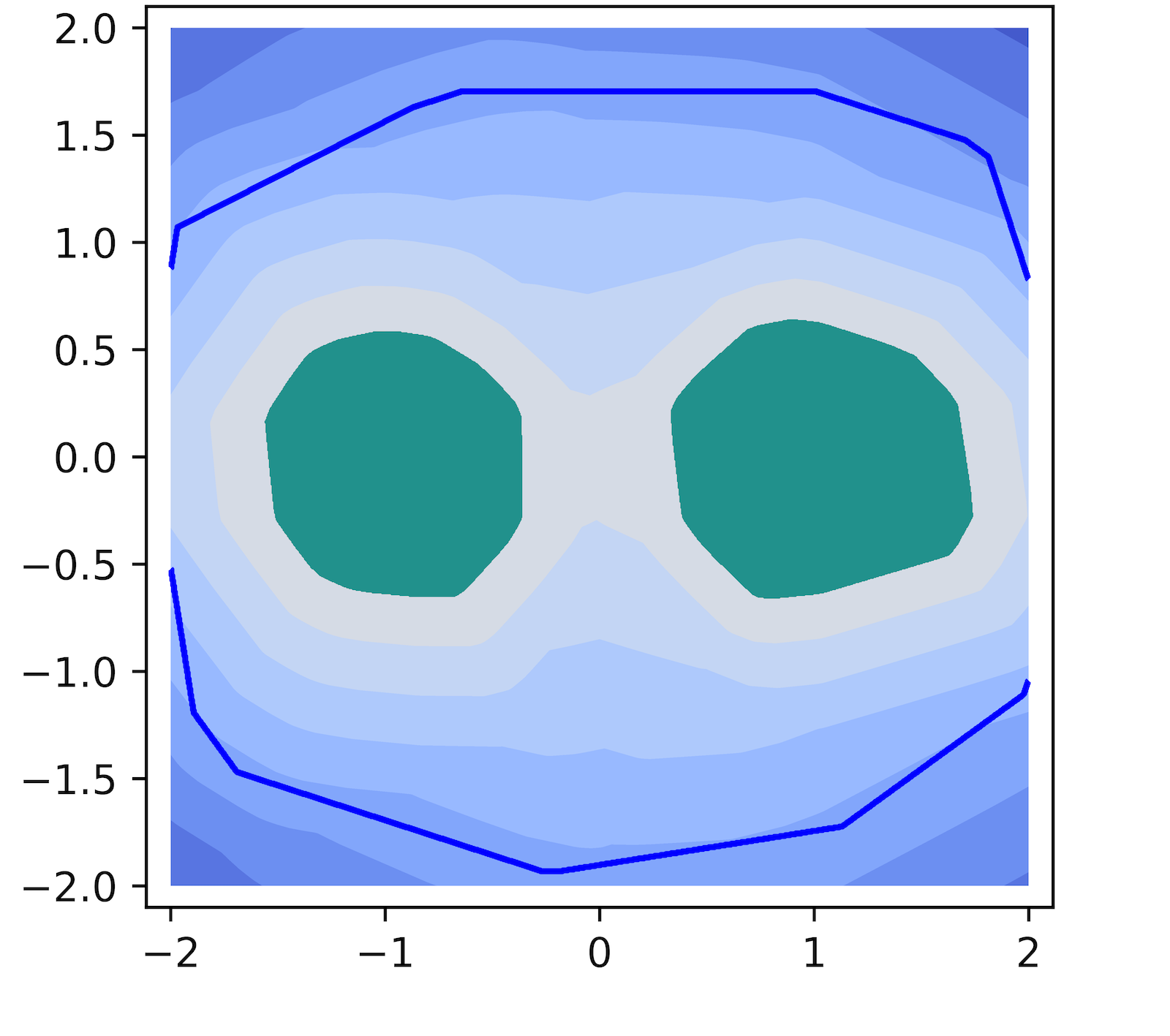}
        \caption{No branching}
    \end{subfigure}
    \begin{subfigure}{0.48\textwidth}
        \includegraphics[width=\linewidth]{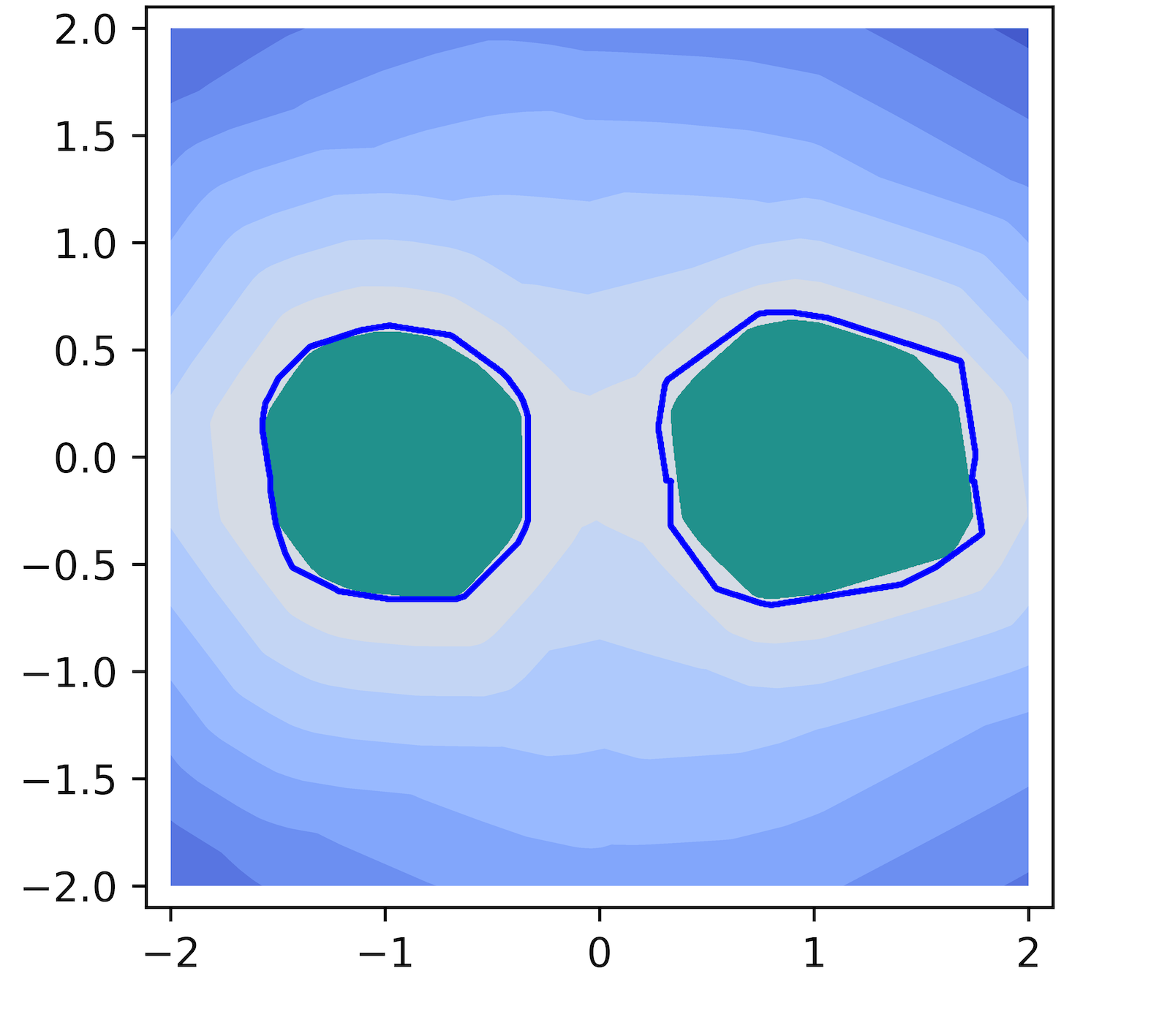}
        \caption{Branching}
    \end{subfigure}
\caption{\textbf{OOD Detection.} Green: Empirical ID inputs (1 million samples). Blue: border of the verified preimage. Branching is union of preimages for each of 4 quadrants.
}
\label{branching-comparison-lines}
\end{center}
\end{minipage}\hfill
\begin{minipage}{0.54\columnwidth}
\captionof{table}{\textbf{OOD Results.} Comparison of methods for over-approximating the preimage of the OOD region (Figure~\ref{branching-comparison-lines}). A lower approximation ratio (\textsc{Approx Rat.}) is better.}
\label{ood-results-table}
\vskip 0.15in
\begin{center}
\begin{small}
\begin{sc}
\begin{tabular}{lccc}
\toprule
Method & \makecell{Input\\Branch.} & \makecell{Approx\\Ratio} & Time (sec) \\
\midrule
MILP & no & 1.47 & 1562.26\\
INVPROP & no & 4.39 & $3.77\pm0.02$\\
INVPROP & yes & 1.14 & $12.02\pm0.06$\\
\bottomrule
\end{tabular}
\end{sc}
\end{small}
\end{center}
\end{minipage}
\end{figure}

\section{Related Work}\label{sec:related-work}

\paragraph{Formally Verified Neural Networks.}
There has been a large body of work on the formal verification of neural networks, tackling the problem from the angles of Interval Bound Propagation \citep{gowal2018effectiveness, 8418593}, Convex Relaxations \citep{wong2018provable, shiqi2018neurify, salman2019convex,dvijotham2018dual}, Abstract Interpretation \citep{NEURIPS2018_f2f44698}, LP \citep{ehlers2017formal}, SDP \citep{raghunathan2018certified}, SMT \citep{katz2017reluplex}, and MILP \citep{tjeng2017evaluating}. However, most of this work is for forward verification (i.e., bounding the NN output given a set of inputs).
Our work strictly generalizes bound propagation as presented in \citealt{xu2021fast} since setting $\bs{\gamma} = \bs{0}$ in Theorem~\ref{thm:gamma-crown} recovers its results. 

\paragraph{Formal Analysis of the Input Domain}
Prior work has studied trying to determine the inverse of a neural network \citep{KINDERMANN1990277, SAAD200778}, though these methods are prohibitively slow. \cite{zhang2018interpreting} used an LP solver to compute changes to a given input that would lead to a change in the classification result while adhering to additional criteria.
\cite{pengfei2021improving} encode the overapproximated network in an LP problem to tighten bounds by incorporating the output constraint. They require an LP solver and we compare our performance against theirs in Section~\ref{sec:evalnnverification}.
\cite{yuyi2023verification,wu2022scalable} similarly tighten intermediate bounds by using an LP solver.
\cite{dimitrov2022provably} compute input regions that only consist of adversarial examples (i.e., adhere to a given output constraint) that are maximized using an LP solver. Finally, a concurrent work \cite{zhang2023preimage} targets under- and over-approximating the preimage via LiRPA-based forward verification~\citep{xu2020automatic} with input-space and ReLU-space partitioning; since our work can be viewed as a generalization of LiPRA, their work may benefit from our results of tightening the intermediate layer bounds using output constraints during bound propagation.


\paragraph{Formal Verification for Neural Feedback Loops.}\label{sec:related-work-controls}
Our control application was motivated by the growing body of work on backward reachability analysis and over-approximating states that result in a target set \citep{everett2021reachability}. The original method solely utilized input constraints for deriving intermediate bounds. The later development of \citealt{everett2022drip} improved upon this by optimizing $\bs{l}^{(0)}$ and $\bs{u^{(0)}}$ with respect to output bounds but their implementation faces numerical instability when applied for many iterations. We also support the partitioning over the input space introduced by \citealt{rober2022hybrid}.
The work of \citealt{9561956} explores utilizing an LP with complete neuron branching to verify control safety, which can be viewed as a domain-specific implementation of our MILP formulation of the inverse verification problem.

\paragraph{Certified OOD detection.}
There is a wide variety of OOD detection systems \citep{yang2021oodsurvey,salehi2022oodsurvey}.
Commonly, they are evaluated empirically based on known OOD data \citep{haoqi2022vim,liang2018odin,chen2021atom}.
Therefore, they fail to provide verified guarantees for their effectiveness.
In fact, many OOD detection systems are susceptible to adversarial attacks \citep{sehwag2019oodattack,chen2022robustood}.
\cite{meinke2021provably} show how to verify robustness to adversarial examples around given input samples.
\cite{berrada2021make} develop a general framework for certifying properties of output distributions of neural networks given constraints on the input distribution. 
However, this work is still constrained to verifying properties of the outputs given constraints (albeit probabilistic) on the inputs, and INVPROP is able to certify arbitrary regions of the input space that lead to confident predictions.

\section{Discussion}\label{sec:discussion}

We present the challenge of over-approximating neural network preimages and provide an efficient algorithm to solve it. By doing so, we demonstrate strong performance on multiple application areas. We believe there is a large scope for future investigation and new applications of INVPROP.

\paragraph{Limitations}
For higher-dimensional instances, our method would best work for box over-approximations ($2d$ hyperplanes in $d$ dimensions) and would struggle at more complicated shapes such as spheres. We expect more complex problems will benefit from a domain-specific strategy. Moreover, iteratively refining all of the intermediate bounds utilizing INVPROP faces a quadratic dependence on network depth, as opposed to a linear dependence in traditional forward verification. To mitigate this, output constraints could only be applied to layers close to the output, as we used for robustness. Finally, our branching strategy might not scale to higher dimensions, though this trade-off is well-studied \citep{bunel2020branch, wang2021beta, depalma2023ibp}.

\paragraph{Potential Negative Social Impact}
Our work improves reliable ML through facilitating systems that provably align with practitioner expectations and we expect INVPROP to have positive societal impact. We acknowledge that our improvements may be repurposed as model attacks, though we believe the positive use cases of our technique greatly outweigh current speculation of misusage.


\paragraph{Acknowledgements} We thank Michael Everett and Nicholas Rober for helpful discussion and feedback on the paper.
Huan Zhang acknowledges the support of the Schmidt Futures AI2050 Early Career Fellowship.

\clearpage

\bibliographystyle{plainnat}
\bibliography{paper}

\clearpage
\appendix

\section{OOD Detection}
\label{sec:ooddetectionfigure}
\begin{figure}[ht]
\begin{center}
\centerline{\includegraphics[width=0.321\columnwidth]{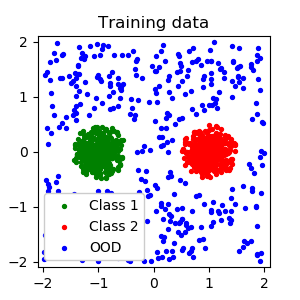} \includegraphics[width=0.429\columnwidth]{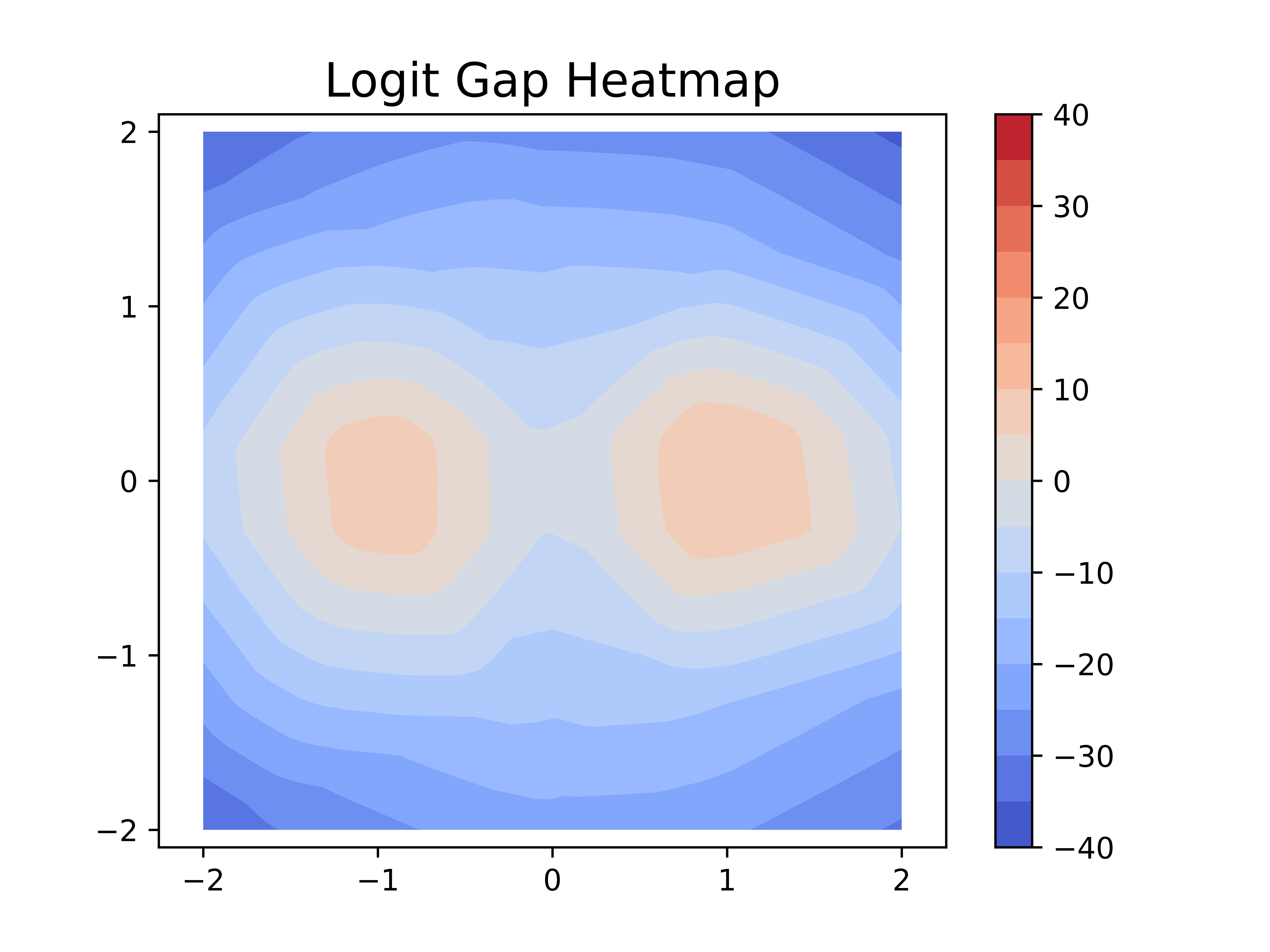}}
\caption{To improve our model's ability to detect data outside the training distribution (of green and red points), we randomly sample points that are far from every training point. In the right contour, we see that the model's confidence of in-distribution increases as we approach the centers of the distribution, demonstrating our model is more calibrated.}
\label{calibrated-toy-ood}
\end{center}
\end{figure}

For the OOD benchmark, we train a neural network on two clusters of data. Standard training would lead to an uncalibrated predictor, we add in OOD data through sampling the input space and rejecting inputs that are too close to any training data point. Training a classifier on this dataset induces the contour on the right, which is a calibrated classifier we can certify properties over. For our experiments, we consider a high-confidence output as $90\%$ confidence of the data being in-distribution.

\section{Example}
\label{intermediate-example}
\begin{exmp}
\begin{figure}[t]
\centering
\includegraphics[scale=0.5]{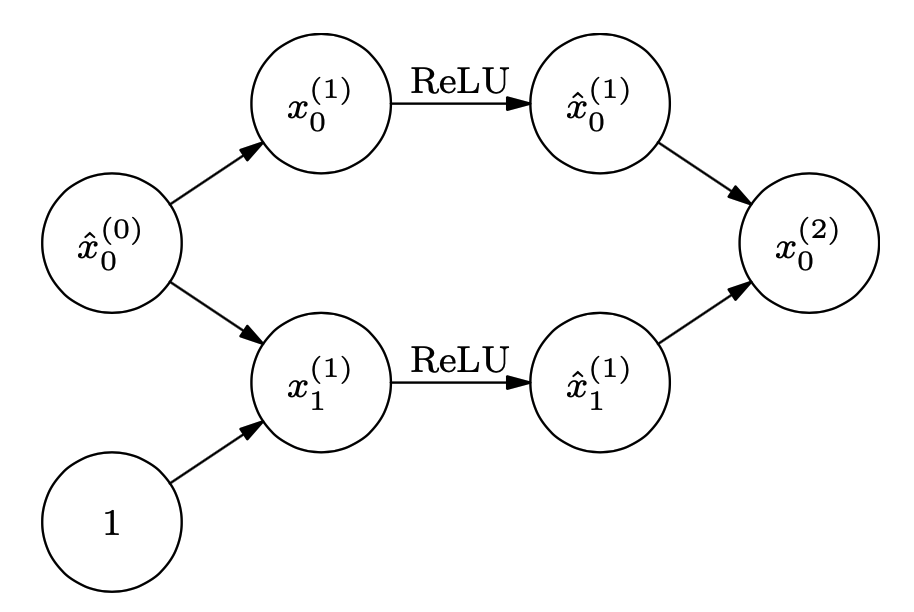}
\caption{A simple neural network from $\mathbb{R}$ to $\mathbb{R}$. Every node is the sum of its incoming nodes unless the edge is labeled ReLU.}
\label{fig:intermediate-demo}
\end{figure}

Consider the toy neural network in Figure~\ref{fig:intermediate-demo} under $\mathcal{X} = [-2, 2]$ and $\Sout = \{y : 1 \leq y \leq 1.02\}$. Suppose we wanted to find the tightest $[l^{(1)}_0, u^{(1)}_0]$ (bounds on $x^{(1)}_0$). If we only enforce the constraints preceding the ReLU (standard bound propagation), we get $[-2, 2]$. If we only enforce the constraints following the ReLU, we get $(-\infty, 1.02]$. However, when we utilize all of the constraints, we find that the true intermediate bounds are $[0, 0.01]$, which is over $100$ times tighter than the intersection of the two previous methods (all derivations provided below). Therefore, new techniques are necessary for optimizing intermediate bounds in this setting.
\end{exmp}

\subsection{Deriving Intermediate Bounds for Example \ref{intermediate-example}}\label{intermediate-example-derivation}

\paragraph{Preceding Constraints.} The linear layer gives us $\bsx^{(1)}_0 = \hat{\bsx}^{(0)}_0$. Since the only constraint we have is $\hat{\bsx}^{(0)}_0 \in [-2, 2]$, the tightest intermediate bounds we can derive are $[-2, 2]$.

\paragraph{Following Constraints.} Since the output of a ReLU is non-negative, we know that $\hat{\bsx}^{(1)}_0, \hat{\bsx}^{(1)}_1 \geq 0$. Since $\bsx^{(2)}_0 = \hat{\bsx}^{(1)}_0 +\hat{\bsx}^{(1)}_1$ and the output constraint enforces $\bsx^{(2)}_0 \leq 1.02$, we derive that $\hat{\bsx}^{(1)}_0 \leq 1.02$. If we set $\hat{\bsx}^{(1)}_0 = 1.02 - \hat{\bsx}^{(1)}_1$, we see that $\hat{\bsx}^{(1)}_0$ can take the entire interval $[0, 1.02]$. Since the only constraint we have on $\bsx^{(1)}_0$ is that $\ReLU(\hat{\bsx}^{(1)}_0) = \bsx^{(1)}$, our desired interval is the preimage of $[0, 1.02]$. This means that $\bsx^{(1)}_0$ can take any value in the interval $(-\infty, 1.02]$.

\paragraph{All Constraints.} We first note that by the first linear layer, we have $\bsx^{(1)}_1 = \bsx^{(1)}_0 + 1$. Therefore, if $\bsx^{(1)}_0$ is less than $0$, then $\bsx^{(1)}_1$ is less than $1$, which means $\ReLU(\bsx^{(1)}_0) + \ReLU(\bsx^{(1)}_1)$ is less than $1$, which contradicts the output constraint. If $\bsx^{(1)}_0$ is always non-negative, then we have that the output is equivalent to $\ReLU(\bsx^{(1)}_0) + \ReLU(\bsx^{(1)}_0 + 1) = 2\bsx^{(1)}_0 + 1$. Therefore, the output constraint implies $\bsx^{(1)}_0 \leq 0.01$. Any $\bsx^{(1)}_0 \in [0, 0.01]$ is achievable by setting the input to the desired value.

\pagebreak


\section{Optimizing Intermediate Bounds}\label{sec:intermediate-bounds-theorem}

We present a generalized theorem which provides a lower bound for any linear combination of $\hat{\bsx}^{(0)}$ and $\bsx^{(i)}$. We prove this theorem in Appendix \ref{sec:theorem-proof}. We note that by selecting the coefficients of these variables (named $\bs{c}^{(i)}$ for $i \in \{0, 1, \ldots, L-1\}$), we recover the two following functionalities which we name $g_{\bs{c}}(\bs{\alpha}, \bs{\gamma})$ and $g_{i,j}^{b}(\bs{\alpha}, \bs{\gamma})$ (respectively).

\begin{itemize}
    \item When the only nonzero coefficients are $\bs{c}^{(0)} = \bs{c}$, we recover Theorem \ref{thm:gamma-crown}. We refer to this bound as $g_{\bs{c}}(\bs{\alpha}, \bs{\gamma})$
    \item When the only nonzero coefficient is $c^{(i)}_j = 1$, we can lower bound $l^{(i)}_j$ by $g_{i,j}^{\text{lower}}(\bs{\alpha}, \bs{\gamma})$. If we set $c^{(i)}_j = -1$, we can upper bound $u^{(i)}_j$ by $-g_{i,j}^{\text{upper}}(\bs{\alpha}, \bs{\gamma})$.
\end{itemize}

With this, we present our generalized theorem.

\begin{theorem}[Lower-bounding combination of neurons]\label{thm:general-gamma-crown}

Given an output set $\Sout = \{\bs{y} : \mathbf{H}\bs{y} + \mathbf{d} \leq \mathbf{0}\}$ and vector $\bs{c}$, $g(\bs{\alpha}, \bs{\gamma})$ is a lower bound to the linear program in \eqref{eq:LP} with objective  $\bs{c}^{(0)\top}\hat{\bs{x}}^{(0)} + \sum_{i=1}^{L-1}\bs{c}^{(i)\top}\bs{x}^{(i)}$ for $\bs{0} \leq \bs{\alpha} \leq \bs{1}$, $\bs{\gamma} \geq \bs{0}$, and $g$ defined via 
$$
\begin{aligned}
g(\bs{\alpha}, \bs{\gamma}) &= \left[\bs{c}^{(0)\top} - \boldsymbol{\nu}^{(1) \top} \mathbf{W}^{(1)}\right]_+\bs{l}^{(0)}-\left[\bs{c}^{(0)\top} -\boldsymbol{\nu}^{(1) \top} \mathbf{W}^{(1)}\right]_-\bs{u}^{(0)} \\ &-\sum_{i=1}^L \boldsymbol{\nu}^{(i) \top} \mathbf{b}^{(i)} + \sum_{i=1}^{L-1} \sum_{j \in \mathcal{I}^{\pm(i)}}\left[\frac{u^{(i)}_{j}l^{(i)}_j [\hat{\nu}^{(i)}_j]_{+}}{u^{(j)}_{i} - l^{(j)}_i} \right]
\end{aligned}
$$
where every term can be directly recursively computed via
$$
\begin{aligned}
\mathcal{I}^{-(i)} &= \{j : u^{(i)}_j \leq 0\}\\
\mathcal{I}^{+(i)} &= \{j : l^{(i)}_j \geq 0\}\\
\mathcal{I}^{\pm(i)} &= \{j : l^{(i)}_j < 0 < u^{(i)}_j\}\\
\boldsymbol{\nu}^{(L)} &= -\boldsymbol{\gamma} \\
\hat{\nu}_j^{(i)} &= \boldsymbol{\nu}^{(i+1) \top} \mathbf{W}_{:, j}^{(i+1)} \\
\nu_j^{(i)} &= \left\{\begin{array}{lr}
\hat{\nu}_j^{(i)} - c^{(i)}_j, &j \in \mathcal{I}^{+(i)} \\
- c^{(i)}_j, & j \in \mathcal{I}^{-(i)} \\
\frac{u^{(i)}_{j}}{u^{(j)}_{i} - l^{(j)}_i}[\hat{\nu}^{(i)}_j]_{+} - \alpha^{(i)}_j[\hat{\nu}^{(i)}_{j}]_{-} - c^{(i)}_j, & j \in \mathcal{I}^{\pm(i)}
\end{array}\right.
\end{aligned}
$$
\end{theorem}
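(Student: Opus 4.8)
The plan is to prove the bound by weak Lagrangian duality applied directly to the linear program \eqref{eq:LP}, but with the generalized objective $\bs{c}^{(0)\top}\hat{\bs{x}}^{(0)} + \sum_{i=1}^{L-1}\bs{c}^{(i)\top}\bs{x}^{(i)}$, following the convex outer adversarial polytope dual of \citep{wong2018provable}. The key observation is that I do not need the tight dual optimum: by weak duality any dual-feasible assignment yields a valid lower bound, so it suffices to exhibit $g(\bs{\alpha},\bs{\gamma})$ as the dual objective evaluated at a particular family of dual-feasible points indexed by $\bs{0}\le\bs{\alpha}\le\bs{1}$ and $\bs{\gamma}\ge\bs{0}$. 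Concretely, I would introduce a free multiplier $\bs{\nu}^{(i)}$ for each linear-layer equality $\bs{x}^{(i)}=\mathbf{W}^{(i)}\hat{\bs{x}}^{(i-1)}+\mathbf{b}^{(i)}$, the nonnegative multiplier $\bs{\gamma}$ for the output constraint $\mathbf{H}\bs{x}^{(L)}+\mathbf{d}\le\bs{0}$, and nonnegative multipliers for the three triangle-relaxation inequalities $\hat{\bs{x}}^{(i)}\ge\bs{0}$, $\bs{x}^{(i)}\le\hat{\bs{x}}^{(i)}$, and $\hat{\bs{x}}^{(i)}\le\frac{\bs{u}^{(i)}}{\bs{u}^{(i)}-\bs{l}^{(i)}}\odot(\bs{x}^{(i)}-\bs{l}^{(i)})$ on the unstable neurons, then form the Lagrangian and reorganize it by primal variable.

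Next, I would derive the recursion from the stationarity conditions. The Lagrangian is separable across the internal variables $\bs{x}^{(i)}$ and $\hat{\bs{x}}^{(i)}$ for $i\ge1$, each ranging freely; for the inner minimization to be finite the coefficient of every such variable must vanish, and these conditions are exactly the stated recursion. Matching the coefficient of $\hat{\bs{x}}^{(i)}$ gives the linear back-substitution $\hat{\nu}^{(i)}_j=\bs{\nu}^{(i+1)\top}\mathbf{W}^{(i+1)}_{:,j}$, with the output constraint supplying the boundary condition $\bs{\nu}^{(L)}=-\bs{\gamma}$ (here $\mathbf{H}$ is folded into $\bs{\gamma}$ as in the forward-verification reduction of Section~\ref{sec:reduction}). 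Matching the coefficient of each $x^{(i)}_j$ and eliminating the triangle multipliers produces the three-way case split for $\nu^{(i)}_j$: for a stable-active neuron ($j\in\mathcal{I}^{+(i)}$) the ReLU is the identity and $\nu^{(i)}_j=\hat{\nu}^{(i)}_j-c^{(i)}_j$; for a stable-inactive neuron ($j\in\mathcal{I}^{-(i)}$) the output is pinned to zero and $\nu^{(i)}_j=-c^{(i)}_j$; and for an unstable neuron ($j\in\mathcal{I}^{\pm(i)}$) the upper facet (slope $u/(u-l)$) is selected when $\hat{\nu}^{(i)}_j>0$ while the lower facet $\hat{x}\ge x$, whose multiplier is the free parameter $\alpha^{(i)}_j\in[0,1]$, is selected when $\hat{\nu}^{(i)}_j<0$, yielding $\frac{u^{(i)}_j}{u^{(i)}_j-l^{(i)}_j}[\hat{\nu}^{(i)}_j]_+-\alpha^{(i)}_j[\hat{\nu}^{(i)}_j]_--c^{(i)}_j$. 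The additive $-c^{(i)}_j$ in every case is precisely the contribution of the generalized term $\bs{c}^{(i)\top}\bs{x}^{(i)}$; setting $\bs{c}^{(i)}=\bs{0}$ for $i\ge1$ recovers Theorem~\ref{thm:gamma-crown}.

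Finally, I would collect the surviving constant terms to identify $g$. Once the internal coefficients vanish, the dual objective reduces to four pieces: minimizing the leftover linear functional $(\bs{c}^{(0)\top}-\bs{\nu}^{(1)\top}\mathbf{W}^{(1)})\hat{\bs{x}}^{(0)}$ over the box $\bs{l}^{(0)}\le\hat{\bs{x}}^{(0)}\le\bs{u}^{(0)}$ gives the $[\,\cdot\,]_+\bs{l}^{(0)}-[\,\cdot\,]_-\bs{u}^{(0)}$ expression; the equality multipliers contribute $-\sum_{i=1}^L\bs{\nu}^{(i)\top}\mathbf{b}^{(i)}$; the upper-facet intercepts of the unstable neurons contribute $\sum_{i=1}^{L-1}\sum_{j\in\mathcal{I}^{\pm(i)}}\frac{u^{(i)}_jl^{(i)}_j[\hat{\nu}^{(i)}_j]_+}{u^{(i)}_j-l^{(i)}_j}$; and the dualized output constraint contributes the constant $\bs{\gamma}^\top\mathbf{d}$ (absorbed into the bias bookkeeping under the reduction of Section~\ref{sec:reduction}). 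Their sum is exactly $g(\bs{\alpha},\bs{\gamma})$, and weak duality closes the argument.

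The main obstacle I expect is the per-neuron elimination of the ReLU-triangle multipliers: showing that, for each sign of $\hat{\nu}^{(i)}_j$, there is a dual-feasible choice of the three facet multipliers realizing the claimed closed form, and that the lower-facet multiplier may be taken to be an arbitrary $\alpha^{(i)}_j\in[0,1]$, so that $g$ is a valid, tunable lower bound rather than only the optimal relaxation. Verifying the single-neuron case and then checking that the recursion composes layer-by-layer, while correctly tracking the signs that decide which facet is active and ensuring the intercept constants land in the final sum, is where the bookkeeping is delicate; the rest is routine coefficient matching.
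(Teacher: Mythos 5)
Your proposal follows essentially the same route as the paper's proof in Appendix~\ref{sec:theorem-proof}: fold $\mathbf{H}, \mathbf{d}$ into the last linear layer, form the Lagrangian of the triangle-relaxation LP with multipliers $\boldsymbol{\nu}, \boldsymbol{\gamma}, \boldsymbol{\mu}, \boldsymbol{\tau}, \boldsymbol{\lambda}$, force the coefficients of the free internal variables to vanish, eliminate the facet multipliers into the $\alpha$ parameterization, and collect the box, bias, and intercept constants. Your framing via weak duality (exhibiting a dual-feasible family rather than invoking strong duality and then restricting it) is a slightly cleaner justification of the ``any $\bs{\alpha}\in[\bs 0,\bs 1]$, $\bs{\gamma}\ge\bs 0$ gives a valid bound'' claim, but it is not a different argument in substance.
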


\section{Dual Derivation}\label{sec:theorem-proof}

We first incorporate the output constraint $\mathbf{H}\bs{x}^{(L)} + \mathbf{d} \leq \mathbf{0}$ by folding in this linear transformation of the output into the final linear layer of the network, as customary in prior work \citep{wang2021beta, zhang2022general}.

We now prove the generalized theorem as described in \ref{sec:intermediate-bounds-theorem}. As a conceptual overview of our proof, we take the Lagrange dual of the linear program to derive an unconstrained optimization problem. From here, we derive constraints that must be satisfied in the $\max \min$ formulation. These constraints yield the bound propagation procedure we display in Theorem \ref{thm:gamma-crown}.

We start with the convex relaxation. $$\begin{aligned}
\min _{\bsx, \hat{\bsx}} \quad &  \bs{c}^{(0)\top}\hat{\bsx}^{(0)} + \sum_{i=1}^{L-1}\bs{c}^{(i)\top}\bsx^{(i)} \\
\text { s.t. } \quad &  \bs{l}^{(0)} \leq \hat{\bsx}^{(0)} \leq \bs{u}^{(0)} \\
& \bsx^{(L)} \leq \boldsymbol{0} ; \\
& \bsx^{(i)} =\mathbf{W}^{(i)} \hat{\bsx}^{(i-1)}+\mathbf{b}^{(i)} ; \quad i \in[L], \\
& \hat{x}_j^{(i)} \geq 0 ; j \in \mathcal{I}^{\pm(i)} \\
& \hat{x}_j^{(i)} \geq x_j^{(i)} ; j \in \mathcal{I}^{\pm(i)} \\
& (u_j^{(i)} - l_j^{(i)})\hat{x}_j^{(i)} \leq u_j^{(i)} x_j^{(i)} - u_j^{(i)}l_j^{(i)} ; j \in \mathcal{I}^{\pm(i)} \\
& \hat{x}_j^{(i)} =x_j^{(i)} ; j \in \mathcal{I}^{+(i)} \\
& \hat{x}_j^{(i)} =0 ; j \in \mathcal{I}^{-(i)} \\
\end{aligned}$$

From this, we take the Lagrange dual of most of the constraints. In doing so, we introduce a new dual variable for each constraint. Note that we do not dualize every constraint since they can easily be dealt with later.

$$\begin{aligned}
\min_{\bsx, \hat{\bsx}} \max_{\boldsymbol{\nu}, \boldsymbol{\mu}, \boldsymbol{\tau}, \boldsymbol{\gamma}, \boldsymbol{\lambda}} \quad & \bs{c}^{(0)\top}\hat{\bsx}^{(0)} + \sum_{i=1}^{L-1}\bs{c}^{(i)\top}\bsx^{(i)} + \boldsymbol{\gamma}^{\top}\bsx^{(L)}+\sum_{i=1}^L \boldsymbol{\nu}^{(i) \top}\left(\bsx^{(i)}-\mathbf{W}^{(i)} \hat{\bsx}^{(i-1)}-\mathbf{b}^{(i)}\right) \\
& +\sum_{i=1}^{L-1} \sum_{j \in \mathcal{I}^{\pm(i)}}\left[\mu_j^{(i)}\left(-\hat{x}_j^{(i)}\right)+\tau_j^{(i)}\left(x_j^{(i)}-\hat{x}_j^{(i)}\right)+\lambda_j^{(i)}\left((u_j^{(i)} - l_j^{(i)})\hat{x}_j^{(i)} - u_j^{(i)} x_j^{(i)} + u_j^{(i)}l_j^{(i)}\right)\right] \\
\text { s.t. }  \bs{l}^{(0)} & \leq \hat{\bsx}^{(0)} \leq \bs{u}^{(0)} ; \quad \hat{x}_j^{(i)} = 0, j \in \mathcal{I}^{-(i)} ; \quad \hat{x}_j^{(i)}=x_j^{(i)}, j \in \mathcal{I}^{+(i)} \\
\boldsymbol{\mu} & \geq 0 ; \quad \boldsymbol{\tau} \geq 0 ; \quad \boldsymbol{\gamma} \geq 0 ; \quad \boldsymbol{\lambda} \geq 0
\end{aligned}$$

Since we took the dual of a linear program, the solution to the min-max optimization is equivalent to the solution of the max-min optimization by strong duality. Therefore, we can solve the following program (equivalent up to rearrangement).

$$\begin{aligned}
\max_{\boldsymbol{\nu}, \boldsymbol{\mu}, \boldsymbol{\tau}, \boldsymbol{\gamma}, \boldsymbol{\lambda}} \min_{\bsx, \hat{\bsx}} \quad &  \left(\boldsymbol{\nu}^{(L)}+\boldsymbol{\gamma}\right)^{\top} \bsx^{(L)} + \left(\bs{c}^{(0)\top} - \boldsymbol{\nu}^{(1) \top} \mathbf{W}^{(1)}\right)\hat{\bsx}^{(0)} \\
& +\sum_{i=1}^{L-1} \sum_{j \in \mathcal{I}^{+(i)}}\left(\nu_j^{(i)}-\boldsymbol{\nu}^{(i+1) \top} \mathbf{W}_{:, j}^{(i+1)} + c^{(i)}_j\right) x_j^{(i)} +\sum_{i=1}^{L-1} \sum_{j \in \mathcal{I}^{-(i)}}\left(\nu_j^{(i)} + c^{(i)}_j\right) x_j^{(i)} \\
& +\sum_{i=1}^{L-1} \sum_{j \in \mathcal{I}^{\pm(i)}}\left[\left(\nu_j^{(i)}+\tau_j^{(i)}-\lambda_j^{(i)}u_j^{(i)}+c_j^{(i)}\right) x_j^{(i)}\right. \\
& \left.+\left(-\boldsymbol{\nu}^{(i+1) \top} \mathbf{W}_{:, j}^{(i+1)}-\mu_j^{(i)}-\tau_j^{(i)}+(u_j^{(i)} - l_j^{(i)})\lambda_j^{(i)}\right) \hat{x}_j^{(i)} \right] \\
& -\sum_{i=1}^L \boldsymbol{\nu}^{(i) \top} \mathbf{b}^{(i)}+\sum_{i=1}^{L-1} \sum_{j \in \mathcal{I}^{\pm(i)}} \lambda_j^{(i)} u_j^{(i)}l_j^{(i)} \\
\text { s.t. }  \bs{l}^{(0)} & \leq \hat{\bsx}^{(0)} \leq \bs{u}^{(0)} \\
\boldsymbol{\mu} & \geq 0 ; \quad \boldsymbol{\tau} \geq 0 ; \quad \boldsymbol{\gamma} \geq 0 ; \quad \boldsymbol{\lambda} \geq 0
\end{aligned}$$

To minimize  $\left(\bs{c}^{(0)\top}-\boldsymbol{\nu}^{(1) \top} \mathbf{W}^{(1)}\right)\hat{\bsx}^{(0)}$ subject to $\bs{l}^{(0)} \leq \hat{\bsx}^{(0)} \leq \bs{u}^{(0)}$, we can consider the choice we must make in each dimension. If the $j$th entry of the coefficient is positive, we should set $\hat{x}^{(0)}_j = l^{(0)}_j$. Otherwise, we should set $\hat{x}^{(0)}_j = u^{(0)}_j$.

$$\begin{aligned}
\max_{\boldsymbol{\nu}, \boldsymbol{\mu}, \boldsymbol{\tau}, \boldsymbol{\gamma}, \boldsymbol{\lambda}} \min_{\bsx, \hat{\bsx}} \quad &  \left(\boldsymbol{\nu}^{(L)}+\boldsymbol{\gamma}\right)^{\top} \bsx^{(L)}+\left[\bs{c}^{(0)\top} - \boldsymbol{\nu}^{(1) \top} \mathbf{W}^{(1)}\right]_+\bs{l}^{(0)}-\left[\bs{c}^{(0)\top} -\boldsymbol{\nu}^{(1) \top} \mathbf{W}^{(1)}\right]_-\bs{u}^{(0)} \\
& +\sum_{i=1}^{L-1} \sum_{j \in \mathcal{I}^{+(i)}}\left(\nu_j^{(i)}-\boldsymbol{\nu}^{(i+1) \top} \mathbf{W}_{:, j}^{(i+1)} + c^{(i)}_j\right) x_j^{(i)} +\sum_{i=1}^{L-1} \sum_{j \in \mathcal{I}^{-(i)}}\left(\nu_j^{(i)} + c^{(i)}_j\right) x_j^{(i)} \\
& +\sum_{i=1}^{L-1} \sum_{j \in \mathcal{I}^{\pm(i)}}\left[\left(\nu_j^{(i)}+\tau_j^{(i)}-\lambda_j^{(i)}u_j^{(i)}+c_j^{(i)}\right) x_j^{(i)}\right. \\
& \left.+\left(-\boldsymbol{\nu}^{(i+1) \top} \mathbf{W}_{:, j}^{(i+1)}-\mu_j^{(i)}-\tau_j^{(i)}+(u_j^{(i)} - l_j^{(i)})\lambda_j^{(i)}\right) \hat{x}_j^{(i)} \right] \\
& -\sum_{i=1}^L \boldsymbol{\nu}^{(i) \top} \mathbf{b}^{(i)}+\sum_{i=1}^{L-1} \sum_{j \in \mathcal{I}^{\pm(i)}} \lambda_j^{(i)} u_j^{(i)}l_j^{(i)} \\
\text { s.t. } \boldsymbol{\mu} & \geq 0 ; \quad \boldsymbol{\tau} \geq 0 ; \quad \boldsymbol{\gamma} \geq 0 ; \quad \boldsymbol{\lambda} \geq 0
\end{aligned}$$

From here, we note that the variables $\bs{x}$ or $\hat{\bs{x}}$ are unconstrained variables. Therefore, if any of their coefficients are nonzero, the inner minimization can immediately drive its value to $-\infty$. As such, the outer maximization must set all of these coefficients to zero. Therefore, we can derive constraints from this restructured optimization and remove the free variables $\bs{x}, \hat{\bs{x}}$.

$$\begin{aligned}
\max_{\boldsymbol{\nu}, \boldsymbol{\mu}, \boldsymbol{\tau}, \boldsymbol{\gamma}, \boldsymbol{\lambda}} \quad & \left[\bs{c}^{(0)\top} - \boldsymbol{\nu}^{(1) \top} \mathbf{W}^{(1)}\right]_+\bs{l}^{(0)}-\left[\bs{c}^{(0)\top} -\boldsymbol{\nu}^{(1) \top} \mathbf{W}^{(1)}\right]_-\bs{u}^{(0)} -\sum_{i=1}^L \boldsymbol{\nu}^{(i) \top} \mathbf{b}^{(i)}\\
&+ \sum_{i=1}^{L-1} \sum_{j \in \mathcal{I}^{\pm(i)}}\lambda_j^{(i)} u_j^{(i)}l_j^{(i)} \\
\text { s.t. } \quad 
\boldsymbol{\nu}^{(L)} &= -\boldsymbol{\gamma} \\
\nu_j^{(i)} &= \boldsymbol{\nu}^{(i+1) \top} \mathbf{W}_{:, j}^{(i+1)} - c^{(i)}_j, j \in \mathcal{I}^{+(i)} \\
\nu_j^{(i)} &= - c^{(i)}_j, j \in \mathcal{I}^{-(i)} \\
\nu_j^{(i)} &= \lambda_j^{(i)}u_j^{(i)}-\tau_j^{(i)}-c_j^{(i)}, j \in \mathcal{I}^{\pm(i)} \\
\boldsymbol{\nu}^{(i+1) \top} \mathbf{W}_{:, j}^{(i+1)} &= (u_j^{(i)} - l_j^{(i)})\lambda_j^{(i)}-\left(\mu_j^{(i)}+\tau_j^{(i)}\right), j \in \mathcal{I}^{\pm(i)} \\
\boldsymbol{\mu} & \geq 0 ; \quad \boldsymbol{\tau} \geq 0 ; \quad \boldsymbol{\gamma} \geq 0 ; \quad \boldsymbol{\lambda} \geq 0
\end{aligned}$$

For the following, we define $\hat{\nu}^{(i)}_j = \boldsymbol{\nu}^{(i+1) \top} \mathbf{W}_{:, j}^{(i+1)}$. We note that since the upper and lower bounds of the neuron relaxation can not be tight simulataneously, at least one of $(u_j^{(i)} - l_j^{(i)})\lambda^{(i)}_j$ and $\mu^{(i)}_j + \tau^{(i)}_j$ must be non-zero. Therefore, we can write them as $(u_j^{(i)} - l_j^{(i)})\lambda^{(i)}_j = [\hat{\nu}^{(i)}_j]_+ $ and $ \mu^{(i)}_j + \tau^{(i)}_j = [\hat{\nu}^{(i)}_j]_-$. We can then use the fact that $\tau^{(i)}_j$ lies in the interval $0$ and $\hat{\nu}^{(i)}_j$ to get the following bound propagation procedure.

$$\begin{aligned}
\max_{\boldsymbol{\nu}, \bs{\alpha}, \boldsymbol{\gamma}} \quad & \left[\bs{c}^{(0)\top} - \boldsymbol{\nu}^{(1) \top} \mathbf{W}^{(1)}\right]_+\bs{l}^{(0)}-\left[\bs{c}^{(0)\top} -\boldsymbol{\nu}^{(1) \top} \mathbf{W}^{(1)}\right]_-\bs{u}^{(0)} -\sum_{i=1}^L \boldsymbol{\nu}^{(i) \top} \mathbf{b}^{(i)}\\
& + \sum_{i=1}^{L-1} \sum_{j \in \mathcal{I}^{\pm(i)}} \left[\frac{u^{(i)}_{j}l^{(i)}_j [\hat{\nu}^{(i)}_j]_{+}}{u^{(j)}_{i} - l^{(j)}_i}\right] \\
\text { s.t. } \quad
\boldsymbol{\nu}^{(L)} &= -\boldsymbol{\gamma} \\
\nu_j^{(i)} &= \boldsymbol{\nu}^{(i+1) \top} \mathbf{W}_{:, j}^{(i+1)} - c^{(i)}_j, j \in \mathcal{I}^{+(i)} \\
\nu_j^{(i)} &= - c^{(i)}_j, j \in \mathcal{I}^{-(i)} \\
\hat{\nu}_j^{(i)} &= \boldsymbol{\nu}^{(i+1) \top} \mathbf{W}_{:, j}^{(i+1)} \\
\nu_j^{(i)} &= \frac{u^{(i)}_{j}}{u^{(j)}_{i} - l^{(j)}_i}[\hat{\nu}^{(i)}_j]_{+} - \alpha^{(i)}_j[\hat{\nu}^{(i)}_{j}]_{-} - c^{(i)}_j, j \in \mathcal{I}^{\pm(i)} \\
\alpha^{(i)}_j & \in [0, 1] ; \quad \boldsymbol{\gamma} \geq 0
\end{aligned}$$

In this program, $\alpha^{(i)}_j$ are optimizable parameters controlling the relaxation of neuron $j$ in layer $i$, similar to the ones appearing in \citealt{xu2021fast}. $\bs{\gamma}$, as discussed in the body of the paper, is the parameter which enforces the output constraint throughout this entire bound propagation procedure. 

\section{Connection to Forward Verification for Intermediate Bounds}\label{sec:skip-connection}

We consider the general objective presented in \ref{sec:intermediate-bounds-theorem} and aim to minimize $\bs{c}^{(0)\top}\hat{\bs{x}}^{(0)} + \sum_{i=1}^{L-1}\bs{c}^{(i)\top}\bs{x}^{(i)}$

$$\begin{aligned}
    \min_{\bsx} \quad & \bs{c}^{(0)\top}\hat{\bs{x}}^{(0)} + \sum_{i=1}^{L-1}\bs{c}^{(i)\top}\bs{x}^{(i)} \\
    \text{s.t.} \quad
    & \bsx \in \mathcal{X} ; \quad \mathbf{H}f(\bsx) + \mathbf{d} \leq 0
\end{aligned}$$

If we take the dual of the constraint, then we get the program

$$\begin{aligned}
    \min_{\bsx} \quad & \bs{c}^{(0)\top}\hat{\bs{x}}^{(0)} + \sum_{i=1}^{L-1}\bs{c}^{(i)\top}\bs{x}^{(i)} + \bs{\gamma}\left(\mathbf{H}f(\bsx) + \mathbf{d}\right)  \\
    \text{s.t.} \quad
    & \bsx \in \mathcal{X} ; \quad \bs{\gamma} \geq 0
\end{aligned}$$

We note that the objective here can be expressed as a neural network with residual stream $f(\bs{x})$ and a skip connection with linear weight $\bs{c}^{(i)}$ from the pre-activations of layer $i$ to the final output for every layer. In theory, we could construct this neural network and directly pass it to a forward verification tool which could iteratively tighten all bounds for a solution.

\section{Implementation}
\label{sec:implementation}
As stated in Algorithm \ref{alg:invprop}, INVPROP first initializes bounds for all layers using some computationally cheap technique.
Based on the input bounds given by $\mathcal{X}$, we first compute intermediate bounds using interval propagation and then tighten them based on the reverse symbolic interval propagation (RSIP) technique \cite{singh2019rsip}.
While INVPROP will iteratively tighten those bounds over time, we found RSIP to reduce the total necessary runtime significantly compared to an initialization based solely on interval propagation.
We initialize all $\bs{\alpha}$ with $0.5$ and all $\bs{\gamma}$ with $0.025$.

The optimization is performed for all lower and upper bounds of all neurons in each layer in parallel, starting with the last layer and moving forward to the input layer. The bounds of the cutting hyperplanes are optimized last, together with the bounds on the input neurons.
The improvements on $\bs{c}^{\top}\bsx$ are measured every 10th iteration. Before doing so, the bounds of the hyperplanes are tightened for 10 extra steps to improve their precision.
We detect convergence by monitoring $\bs{c}^{\top}\bsx$.
If successive iterations see minimal improvement, we stop or branch on the input space.
All cutting hyperplanes are evenly distributed to maximize their information gain.
For 40 hyperplanes in a 2D input space, we rotate each plane by 9\textdegree.
All reported runtimes under $10$ minutes are computed as the average of five runs.

\subsection{Encoding Non-Linear Constraints}
\label{sec:encodemax}
To support non-linear output sets, such as the maximum operation used in the OOD example in Section \ref{sec:results-ood}, the non-linearity needs to be encoded, such that it can be expressed as linear constraints over the modified network.
We rewrite $\max(y_1, y_2) = \max(y_1 - y_2, 0) + y_2$ and add an additional ReLU layer for this operation.
Note that to pass $y_2$ through this layer without modifying it, one can either write $y_2 = \max(y_2, 0) - \max(-y_2, 0)$, or $y_2 = \max(y_2 - M, 0) + M$ where $M$ is the lower bound of all possible $y_2$. 
We find that avoiding the additional ReLU relaxations that would occur for the first approach is beneficial for the optimization and compute a lower bound of $M$ using interval propagation.
As $y_3$ (for the OOD class) should not be changed by this operation either, we apply the same trick of subtracting and adding its lower bound.

\subsection{Control Benchmark Encoding}
\label{sec:controlbenchmarkencoding}
We encode the entire control formula $\bs{x} = \mathbf{A}\bs{x} + \mathbf{B}\bs{u}$ as one feedforward network by encoding the residual connection as regular fully connected layers.
To this end, we use the same technique described in Section \ref{sec:encodemax} to shift the bounds into the positive regime, feed them forward and then shift them back.

To compute $\Sover$ for a timestep $t > 1$, we first stack the network $t$ times, then simplify it by merging consecutive linear layers.
All bounds of layers not affected by this merging that also appear in the network for $t-1$ timesteps are reused.
Their bounds are already tight enough and are not optimized further.
Note that this is different than using the previous $\Sover$ as the new target region and using an unstacked network:
All bounds of the new layers are still optimized w.r.t. the precise target $\Sout$. 
Therefore, we do not suffer from accumulating inaccuracies.

\section{Hardware}
\label{sec:hardwar}
For the control benchmark, all experiments were performed on a Dual Xeon Gold 6138.

For the OOD benchmark, all experiments were performed on an Intel Xeon Platinum 8160 processor using 8 cores and 40GB RAM, as well as a V100-SXM2 GPU.

For the MNIST and YOLO robostness verification benchmarks, an AWS instance of type g5.2xlarge was used, with is equipped with a AMD EPYC 7R32 processor with 8 cores, 32GB RAM, as well as an A10G GPU.

\section{Robustness Verification}
\label{sec:detailedrobustnessbenchmarks}
For the comparison in Section~\ref{sec:evalnnverification}, we have implemented the concept of INVPROP in $\abcrown$.

Figure~\ref{fig:mnistbenchmark} compares the performance of $\abcrown$ and $\abcrown$ + INVPROP on the MNIST benchmark defined by \cite{pengfei2021improving}.

\begin{figure}
    \centering
    \includegraphics[trim={0 2.3cm 0 10cm},clip,width=0.7\textwidth]{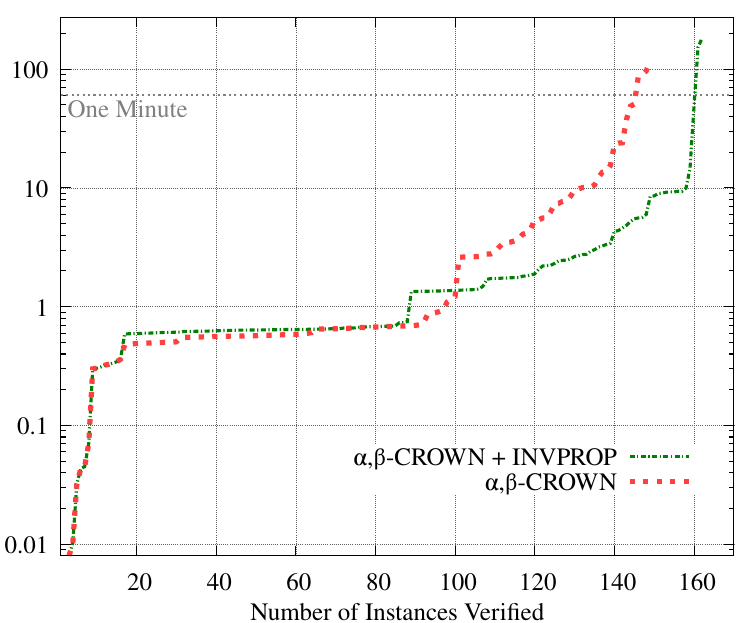}
    \caption{Detailed runtime comparison of $\abcrown$ and $\abcrown$ + INVPROP. Except for a few instances, $\abcrown$ extended with INVPROP can verify the properties faster, and can prove robustness for some instances that cause a timeout for pure $\abcrown$. The timeout per instance is 5 minutes.}
    \label{fig:mnistbenchmark}
\end{figure}

\end{document}